\newtheorem{theorem}{Theorem}
\newtheorem{lemma}[theorem]{Lemma}
\newtheorem{remark}[theorem]{Remark}
\newcommand{\bdmath}{\begin{dmath}}
\newcommand{\edmath}{\end{dmath}}
\newcommand{\beq}{\begin{equation}}
\newcommand{\eeq}{\end{equation}}
\newcommand{\bdm}{\begin{displaymath}}
\newcommand{\edm}{\end{displaymath}}
\newcommand{\bea}{\begin{eqnarray}}
\newcommand{\eea}{\end{eqnarray}}
\newcommand{\beal}{\beq \begin{array}{ll}}
\newcommand{\eeal}{\end{array} \eeq}
\newcommand{\beas}{\begin{eqnarray*}}
\newcommand{\eeas}{\end{eqnarray*}}
\newcommand{\ba}{\begin{array}}
\newcommand{\ea}{\end{array}}
\newcommand{\bit}{\begin{itemize}}
\newcommand{\eit}{\end{itemize}}
\newcommand{\ben}{\begin{enumerate}}
\newcommand{\een}{\end{enumerate}}
\newcommand{\calF}{{\cal F}}
\newcommand{\calI}{{\cal I}}
\newcommand{\setS}{\textsf{S}}
\newcommand{\setal}{~\emph{et~al.}\xspace}
\newcommand{\M}[1]{{\bm #1}} %
\renewcommand{\boldsymbol}[1]{{\bm #1}}
\newcommand{\hide}[1]{}
\newcommand{\hiddenText}{{\color{gray} hidden text.}}
\newcommand{\hideWithText}[1]{\hiddenText}
\newcommand{\one}{ {\mathbf{1}} }
\newcommand{\norm}[1]{\left\| #1 \right\|}
\newcommand{\tran}{^{\mathsf{T}}}
\newcommand{\diag}[1]{\mathrm{diag}\left(#1\right)}
\newcommand{\ones}{{\mathbf 1}}
\newcommand{\Real}[1]{ { {\mathbb R}^{#1} } }
\newcommand{\SOthree}{\ensuremath{\mathrm{SO}(3)}\xspace}
\newcommand{\MD}{\M{D}}
\newcommand{\MR}{\M{R}}
\newcommand{\MF}{\M{F}}
\newcommand{\MX}{\M{X}}
\newcommand{\MZ}{\M{Z}}
\newcommand{\vh}{\boldsymbol{h}}
\newcommand{\vc}{\boldsymbol{c}}
\newcommand{\vt}{\boldsymbol{t}}
\newcommand{\vxx}{\boldsymbol{x}}
\newcommand{\vzz}{\boldsymbol{z}}
\newcommand{\blue}[1]{{\color{blue}#1}}
\newcommand{\linkToPdf}[1]{\href{#1}{\blue{(pdf)}}}
\newcommand{\linkToPpt}[1]{\href{#1}{\blue{(ppt)}}}
\newcommand{\linkToCode}[1]{\href{#1}{\blue{(code)}}}
\newcommand{\linkToWeb}[1]{\href{#1}{\blue{(web)}}}
\newcommand{\linkToVideo}[1]{\href{#1}{\blue{(video)}}}
\newcommand{\linkToMedia}[1]{\href{#1}{\blue{(media)}}}
\newcommand{\award}[1]{\xspace} %
\newcommand{\vz}{\boldsymbol{z}}
\newcommand{\bftab}{\fontseries{b}\selectfont}
\definecolor{cvprblue}{rgb}{0.21,0.49,0.74}
\newcommand{\name}{\texttt{CRISP}\xspace}
\newcommand{\pipelineNameST}{\pipelineName-\texttt{ST}\xspace}
\newcommand{\pipelineName}{\texttt{CRISP}\xspace}
\newcommand{\PNC}{PNC\xspace}
\newcommand{\PGD}{BCD\xspace}
\newcommand{\LSQ}{LSQ\xspace}
\newcommand{\RGBD}{RGB-D\xspace}
\newcommand{\DPT}{DPT\xspace}
\newcommand{\NOCS}{NOCS\xspace}
\newcommand{\NOCSREAL}{REAL275\xspace}
\newcommand{\NOCSREALTwoSevenFive}{REAL275\xspace}
\newcommand{\NOCSCAMERA}{CAMERA\xspace}
\newcommand{\pipelineNameReal}{\texttt{CRISP-Real}\xspace}
\newcommand{\pipelineNameSyn}{\texttt{CRISP-Syn}\xspace}
\newcommand{\pipelineNameSynLSQAdapt}{\texttt{CRISP-Syn-ST (LSQ)}\xspace}
\newcommand{\pipelineNameSynPGDAdapt}{\texttt{CRISP-Syn-ST (BCD)}\xspace}
\newcommand{\ocx}{\ensuremath{\texttt{oc}}\xspace}
\newcommand{\indicator}[1]{\ensuremath{\mathbb{I}\left\{ #1 \right\}}}
\newcommand{\omitt}[1]{ }
\definecolor{tfirst}{rgb}{0.40,0.76,0.64} 
\definecolor{tsecond}{rgb}{0.62,0.87,0.87}
\definecolor{tthird}{rgb}{0.88,0.97,0.98}
\newcommand{\tabfirst}{\cellcolor{tfirst}\bftab}  %
\newcommand{\tabsecond}{\cellcolor{tsecond}}  %
\newcommand{\tabthird}{\cellcolor{tthird}}  %
\newcommand{\tabfirstbox}{\colorbox{tfirst}{\rule{0pt}{2pt}\rule{2pt}{0pt}}}
\newcommand{\tabsecondbox}{\colorbox{tsecond}{\rule{0pt}{2pt}\rule{2pt}{0pt}}}
\newcommand{\tabthirdbox}{\colorbox{tthird}{\rule{0pt}{2pt}\rule{2pt}{0pt}}}
\title{\pipelineName: Object Pose and Shape Estimation with Test-Time Adaptation}
\author{Jingnan Shi, Rajat Talak, Harry Zhang, David Jin, and Luca Carlone\\
Laboratory for Information \& Decision Systems (LIDS) \\
Massachusetts Institute of Technology \\
{\tt\small \{jnshi, talak, harryz, jindavid, lcarlone\}@mit.edu}
}
\begin{document}
\maketitle

\begin{abstract}
We consider the problem of estimating object pose and shape from an RGB-D image.
Our first contribution is to 
 introduce \pipelineName, a category-agnostic object pose and shape estimation pipeline. The pipeline implements an encoder-decoder model for shape estimation. It uses FiLM-conditioning for implicit shape reconstruction and a DPT-based network for estimating pose-normalized points for pose estimation. 
As a second contribution, 
we propose an optimization-based pose and shape corrector that can correct estimation errors caused by a domain gap.
Observing that the shape decoder is well behaved in the convex hull of known shapes, 
we approximate the shape decoder with an active shape model, and show that this reduces the shape correction problem to a constrained linear least squares problem, which can be solved efficiently by an interior point algorithm.   
Third, 
we introduce %
a self-training pipeline to perform self-supervised domain adaptation of \pipelineName. %
The self-training is based on a correct-and-certify approach, which leverages the corrector to generate pseudo-labels at test time, and uses them to self-train \pipelineName.
We demonstrate \name (and the self-training) on YCBV, SPE3R, and NOCS datasets. \name shows high performance on all the datasets. 
Moreover, our self-training is capable of bridging a large domain gap. Finally, \name also shows an ability to generalize to unseen objects.  
Code and pre-trained models will be available on the project webpage.\footnote{\url{https://web.mit.edu/sparklab/research/crisp_object_pose_shape/}}

\end{abstract}

\section{Introduction}
\label{sec:intro} 

\begin{figure}[t!]
    \centering
    \includegraphics[width=\linewidth]{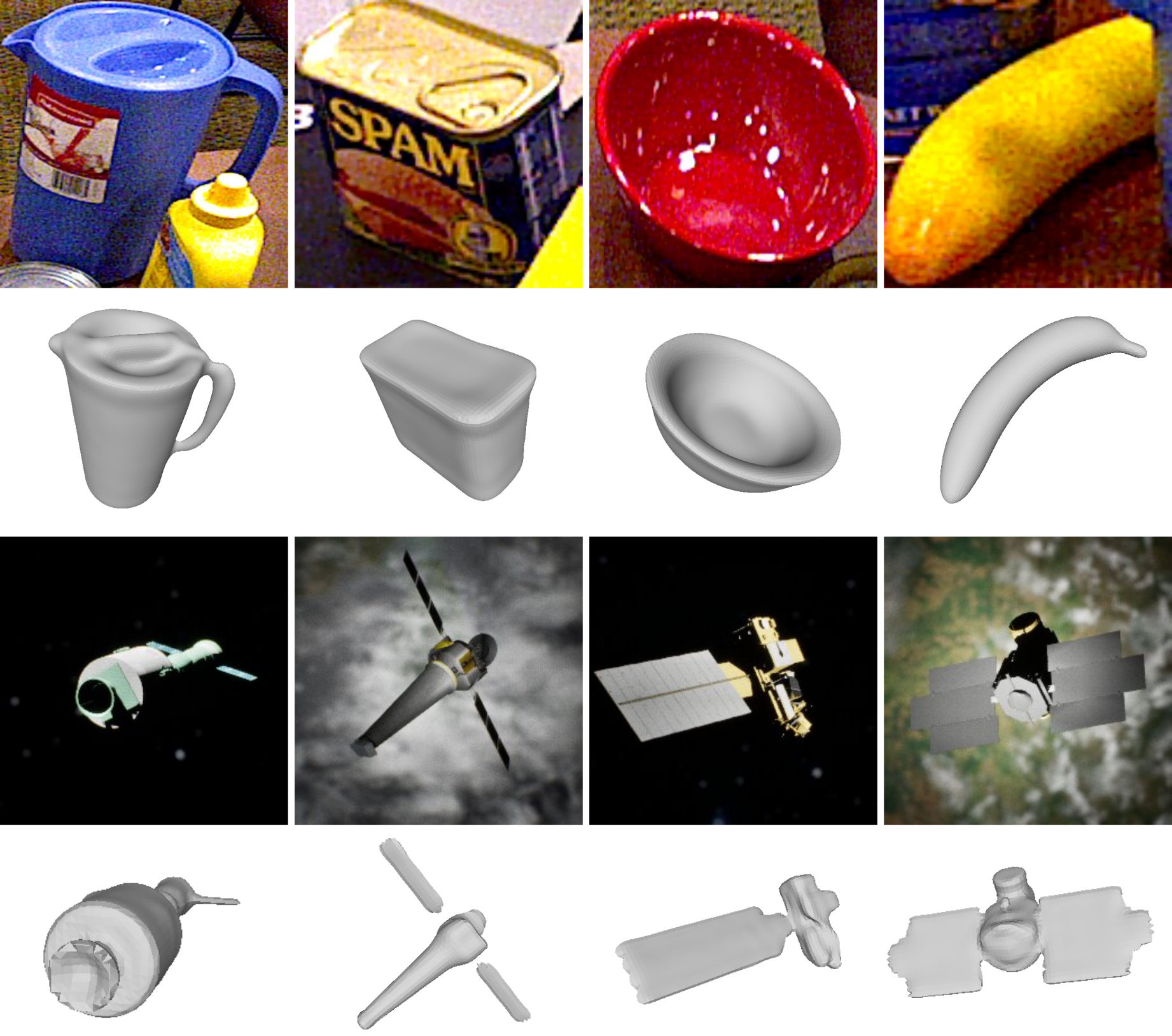}
    \caption{We introduce \pipelineName, a category-agnostic object pose and shape estimation pipeline, and a test-time adaptive self-training method \pipelineNameST to bridge domain gaps. 
    \emph{Top:} Qualitative examples of \pipelineName on the YCBV dataset~\cite{Xiang17rss-posecnn}.
    \emph{Bottom:} Qualitative examples of \pipelineName on the SPE3R dataset~\cite{Park24aiaa-spe3r}. }
    \label{fig:intro}
    \vspace{-5mm}
\end{figure}

Accurately estimating the geometry %
of objects is necessary in
many vision applications ranging from augmented reality~\cite{Zhao24arxiv-egopressure} to robotics~\cite{Hagele16springer-industrialRobotics},
and autonomous docking in space~\cite{Chen19ICCVW-satellitePoseEstimation}.
While significant progress has been made in solving object perception~\cite{Xiang17rss-posecnn,Wang19-normalizedCoordinate,Ze22neurips-wild6d},
real-world deployments remain challenging due to lack of generalizable models and in-domain training data~\cite{Sanneman21ftr-stateIndustrialRobot}.

Existing approaches to object perception tasks focus on instance-level 6D pose estimation~\cite{Labbe20eccv-CosyPose,Labbe22corl-megapose}, category-level object 6D pose and shape estimation~\cite{Tian20eccv-SPD, Ze22neurips-wild6d}, 3D bounding box estimation~\cite{Brazil23cvpr-omni3d,Mousavian17cvpr-3dBbox}, and more recently category-agnostic object pose and shape estimation~\cite{Lunayach24icra-FSD}.
Approaches for the first two tasks usually require object CAD model or category-level priors at inference time. 
On the other hand, 3D bounding box may be too coarse a representation to be useful in many applications (\eg autonomous space debris removal~\cite{Park24aiaa-spe3r}).  
Recent work on conditional diffusion models have shown promise in single-view 3D reconstruction~\cite{Jun23arxiv-shap-e, Liu24neuips-oneTwoThree}. However, their inference time remains prohibitively high for real-time operation~\cite{Liu24neuips-oneTwoThree}.

In addition, the domain gap issue, where the distribution of the test data differs from training, remains a major concern.
Developing an object perception model is not enough as a domain gap can render its estimates  completely useless, or worse still, hazardous in safety-critical applications~\cite{Sanneman21ftr-stateIndustrialRobot}. 
Therefore, in conjunction to the object perception model, it is necessary to develop methods that can bridge any large domain gap and work well at test-time.

Towards addressing these issues, this paper presents three main contributions:
\begin{itemize}
	\item We introduce \name, an object pose and shape estimation pipeline. \name combines a pre-trained vision transformer (ViT) backbone with a dense prediction transformer (DPT) and feature-wise linear modulation (FiLM) conditiong to estimate the 6D pose and shape of the 3D object from a single \RGBD image~\cite{Ranftl21iccv-DPT, Perez18aaai-FiLMVisual}. \name is category-agnostic 
	(\ie, it does not require knowledge of the object category at test time). 
	\item We propose an optimization-based pose and shape corrector that can correct estimation errors. The corrector is a bi-level optimization problem and we use block coordinate descent to solve it. 
		We approximate the shape decoder in \name by an active shape model, and show that (i) this is a reasonable approximation, and (ii) doing so turns it into a constrained linear least squares problem, which can be solved efficiently using interior-point methods 
		and yields just as good shapes as the trained decoder. 

	\item We adapt a \emph{correct-and-certify} approach to self-train \name and bridge any large domain gap. 
		During self-training,
		we use the corrector to correct for pose and shape estimation errors.
		Then, we assert the quality of the output of the corrector using an observable correctness certificate inspired by~\cite{Shi23rss-ensemble}, and create pseudo-labels using the estimates that pass the certificate check.
		Finally, we train the model on these pseudo-labels with standard stochastic gradient descent. Contrary to~\cite{Peng22aaai-selfSupPoseShape,Lunayach24icra-FSD}, we do not need access to synthetic data during self-training.

\end{itemize}

We demonstrate \name on the YCBV, SPE3R, and NOCS datasets. \name shows high performance on all the datasets.
Moreover, our self-training is able to bridge a large domain gap.%
Finally, we observe that \name is able to generalize and estimate pose and shape of objects unseen during training, and is fast for real-time inference.

\section{Related Work}
\label{sec:related_work}

\paragraph{Object Pose and Shape Estimation.}
The task of object pose and shape estimation is to recover the 3D poses and shape of an object under observation.
This is different from the task of instance-level pose estimation,
which usually assumes object CAD models are provided during test-time inference~\cite{Wang19nips-prnet, Wang21cvpr-GDRNetGeometryGuided, Xiang17rss-posecnn, Labbe20eccv-CosyPose, Labbe22corl-megapose, Li18eccv-DeepIMDeep, Deng24cvpr-unsupervised}.
Category-level approaches have been proposed, which aim to simultaneously estimate the shape and pose of the objects, despite large intra-class shape variations.
One common paradigm in category-level pose and shape estimation is to first regress some intermediate representations, and solve for poses and shapes.
Pavlakos\setal~\cite{Pavlakos17icra-semanticKeypoints} use a stacked hourglass neural network~\cite{Newell16-stackedHourglass} for 2D semantic keypoint detection.
Wang\setal propose Normalized Object Coordinates (NOCS) to represent the objects as well as use them to recover pose and shape~\cite{Wang19-normalizedCoordinate}. 
Many recent works adopt category-specific shape priors, which are deformed to match the observation during inference.
Tian\setal~\cite{Tian20eccv-SPD} propose to learn the category mean shapes from training data, and then use them for pose and shape estimation.
Ze and Wang~\cite{Ze22neurips-wild6d} propose RePoNet, which trains the pose and shape branches using differentiable rendering.
However, the use of category-level priors limit the scalability of these approaches.

Recently, there have been a number of works using neural implicit fields for object pose and shape estimation~\cite{Lunayach24icra-FSD, Peng22aaai-selfSupPoseShape, Irshad22eccv-shapo}.
Peng\setal~\cite{Peng22aaai-selfSupPoseShape} combine PointNet~\cite{Qi17cvpr-pointnet} with a DeepSDF decoder~\cite{Park19cvpr-deepSDF} for representing shapes.
Irshad\setal~\cite{Irshad22eccv-shapo} learn a disentangled implicit shape and apperance model for joint appearance, shape and pose optimization.
Comparing with conditional diffuion models that are prevalent in the shape reconstruction literature~\cite{Jun23arxiv-shape, Liu24neuips-oneTwoThree},
such regression-based methods are significantly faster for inference, making 
them suitable for real-time applications.

\paragraph{Test-Time Adaptation.} Test-time adaptation allows methods to handle distribution shifts between training and testing, 
leading to better performance on unseen domains~\cite{Xiao24arxiv-beyondAdaptation}.
Wang\setal~\cite{Wang20eccv-Self6DSelfSupervised} train a pose estimation model on synthetic RGB-D data, and then refine it further with differentiable rendering on real, unannotated data.
However, the approach is limited to objects with known CAD models.
Lunayach\setal~\cite{Lunayach24icra-FSD} use a self-supervised chamfer loss to provide training signals on unannotated data. 
However, it requires access to a set of synthetic labeled data during self-supervision to stablize training, making it less suitable for test-time adaption in real-world robotics deployments.
Talak\setal~\cite{Talak23tro-c3po} propose a correct-and-certify approach to self-train pose estimation models based on a corrector and binary certificates, which is further extended in~\cite{Shi23rss-ensemble} to handle outliers. 
However, the approach assumes known CAD models at test-time.

\section{Object Pose and Shape Estimation}
\label{sec:arch}
\begin{figure*}[t!]
\centering
\includegraphics[width=\linewidth]{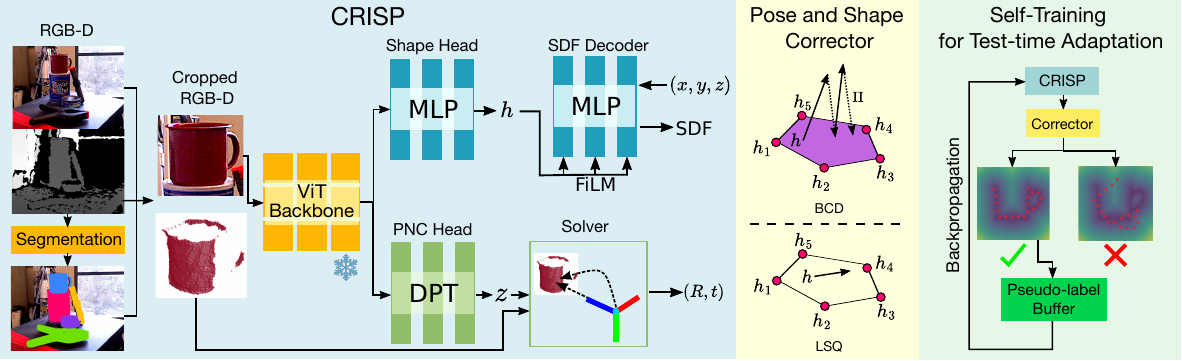}
\caption{
Overview of our contributions. 
Given an segmented RGB image $\calI$ and depth points $\MX$ of the object, \pipelineName %
extracts features from the cropped image. It estimates the object pose, by estimating pose-normalized coordinates (\PNC) $\MZ$, and shape, by reconstructing the signed distance field (SDF) of the object. The pose and shape estimates are corrected by the corrector which solves a bi-level optimization problem using two solvers: \PGD (Alg.~\ref{alg:corrector}) and \LSQ (Alg.~\ref{alg:corrector-lsq}). The self-training uses corrected estimates that pass the observable certification check~\eqref{eq:oc-cert} as pseudo-labels. The SDF decoder is fixed during self-training.}
\label{fig:pipeline}
\vspace{-5mm}
\end{figure*}

We consider the problem of object pose and shape estimation from an RGB-D image $(\calI, \MX)$; where $\calI$ is the RGB image of the detected object and $\MX = [\vxx_1, \vxx_2, \ldots \vxx_n] \in \Real{3\times n}$ is the depth point cloud of the segmented object. 
Each point $\vxx_i$ corresponds to a pixel in the image $\calI$.
Given $(\calI, \MX)$, our goal is to estimate the shape $f(\cdot~|~\calI)$ and pose $(\MR, \vt) \in \SOthree\times \Real{3}$ of the object, where $f(\cdot~|~\calI)$ is the signed distance field (SDF) of the object in a pose-normalized frame. We can write this as an optimization problem:
\begin{equation}
\begin{aligned}
\label{eq:original}
& \underset{f, \MR, \vt}{\text{Minimize}}
& & \sum_{i=1}^{n} |f(\MR \vxx_i + \vt~|~\calI)|^2\\
& \text{subject to}
& & (\MR, \vt) \in \SOthree\times \Real{3} \\
&&& f(\cdot~|~\calI) \in \calF.
\end{aligned}
\end{equation}
where the objective forces the depth points measured on the object surface to match the zero-crossing of the SDF.
This problem is hard because the space of shapes $\calF$ and the input $\calI$ are not easy to characterize. Secondly, the problem is not well specified, if $\calF$ is not sufficiently constrained by a  set of priors. 
Thirdly, estimating $(\MR, \vt)$, by solving~\eqref{eq:original} is difficult as the objective is non-linear and non-convex, and the set of object poses, \ie, $\SOthree \times \Real{3}$, is a manifold.  

We now present how learning-based components can be used to obviate these issues and approximate the optimization problem~\eqref{eq:original} by performing forward passes on the learning-based components. %
This forms
our category-agnostic pose and shape estimation pipeline \pipelineName (Fig.~\ref{fig:pipeline}).

\subsection{Shape Estimation}
\label{sec:arch-shape}
We use a deep encoder-decoder architecture to estimate the object shape from an image (Fig.~\ref{fig:pipeline}). An encoder network estimates a latent shape code $\vh = f_e(\calI) \in \Real{d}$, from the image $\calI$. A shape decoder produces an SDF $f_d(\cdot~|~\vh)$ of the object, given the latent shape code $\vh$. The shape $f_d(\cdot~|~\vh)$ is in a pose-normalized frame, \ie, centered at origin and is invariant to the pose of the object in the image.

\emph{Architecture Details.} Given an \RGBD image, 
we first extract features tokens with a pre-trained ViT backbone (\eg, DINOv2~\cite{Oquab23arxiv-dinov2}).
These are concatenated and passed through a multi-layer perceptron (MLP) to regress the latent shape code $\vh$. This forms our encoder $\vh = f_e(\calI)$. 
The shape decoder is a MLP with sinusoidal activations~\cite{Sitzmann20neurips-siren}. We use FiLM~\cite{Perez18aaai-FiLMVisual} to condition the decoder.
FiLM conditioning allows us to condition the SDF without any explicit category labels. This has been shown to produce better reconstruction results for neural implicit fields~\cite{Chan21cvpr-PiGAN}, in comparison to other popular methods like conditioning-by-concatenation.

If the encoder estimates the correct shape, 
the pose and shape estimation problem~\eqref{eq:original} reduces to
\begin{equation}
\begin{aligned}
\label{eq:original-pose}
& \underset{\MR, \vt}{\text{Minimize}}
& & \sum_{i=1}^{n} |f_d(\MR \vxx_i + \vt~|~\vh)|^2\\
& \text{subject to}
& & (\MR, \vt) \in \SOthree\times \Real{3}, \vh = f_e(\calI),
\end{aligned}
\end{equation}
where the pose estimation only depends on the decoder network $f_d$.
This is still a hard problem as it requires one to optimize a non-convex function over a manifold.

\subsection{Pose Estimation}
\label{sec:arch-pose}
We now use another network to simplify the pose estimation problem~\eqref{eq:original-pose}.
Inspired by~\cite{Wang19-normalizedCoordinate}, we use a deep neural network to estimate, for each pixel and depth point $\vxx_i$, a corresponding point $\vzz_i$ in the pose-normalized frame. We estimate these pose normalized coordinates (\PNC) $\MZ = [\vz_1, \vz_2, \ldots \vz_n]$ directly from the image, \ie
\begin{equation}
	\MZ = [\phi_1(\calI), \phi_2,(\calI), \ldots \phi_n(\calI)] = \Phi(\calI).
\end{equation}
 
\emph{Architecture Details.} We use a Dense Prediction Transformer (\DPT) to estimate the \PNC~\cite{Ranftl21iccv-DPT}.  
The \DPT uses feature tokens extracted from the image $\calI$ and passes them through 
trainable reassemble blocks. These are then progressively added and upsampled together to produce a fine-grained prediction.
The final output head is a CNN that directly regresses the \PNC $\MZ$. See Fig.~\ref{fig:pipeline}. 

If the estimated \PNC $\MZ$ are correct, then the object pose $(\MR, \vt)$, in~\eqref{eq:original-pose}, can be obtained by directly solving 
\begin{equation}
	\label{eq:pose}
	\vzz_i = \MR \vxx_i + \vt~~\forall~i \in [n],
\end{equation}
for $\MR \in \SOthree$ and $\vt \in \Real{3}$. We solve~\eqref{eq:pose} using Arun's method~\cite{Arun87pami}, which obtains $(\MR, \vt)$ by minimizing the least squares error in satisfying~\eqref{eq:pose}. %

\begin{remark}[Difference from NOCS~\cite{Wang19-normalizedCoordinate}]
Contrary to NOCS~\cite{Wang19-normalizedCoordinate}, we do not normalize $\MZ$. This is crucial for self-training: involving scale estimation leads to degeneracies of PNC, causing $\MZ$ to drift away from the ground truth (see Appendix~\ref{sec:app:add-exp}).
\end{remark}

\subsection{Supervised Training}
\label{sec:arch-supervised-training}
\pipelineName can be trained on both synthetic and real-world annotated data.
We require no category labels ---
any collection of CAD models can be used and they may belong to different categories. 
 We use a soft $L_1$ loss on \PNC $\MZ$ and the SDF loss specified in~\cite{Sitzmann20neurips-siren} (Appendix~\ref{sec:app:training-details}).
In Section~\ref{sec:st}, we will see how to use the corrector to self-train \pipelineName, when the trained model suffers a domain gap at test-time.

\section{Pose and Shape Correction}
\label{sec:llsq}
In Section~\ref{sec:arch}, we saw that the pose and shape estimation problem~\eqref{eq:original} is obviated if \pipelineName estimates the correct latent shape code and \PNC. 
In this section, we re-instate the optimization problem and tackle the case when the estimates from the learning-based components deviate from the ground-truth. 
We 
derive an optimization-based corrector that helps correct the estimated object pose and shape (Section~\ref{sec:arch-corrector}).
We then 
propose an active shape decoder, which simplifies the shape correction problem to a constrained linear least squares problem (Section~\ref{sec:llsq-active-shape-decoder}). This active shape decoder enables faster refinement (Section~\ref{sec:expt}). %

\subsection{Pose and Shape Corrector}
\label{sec:arch-corrector}
We now re-state the pose and shape optimization problem~\eqref{eq:original-pose}, and treat the latent shape code $\vh$ and the \PNC $\MZ$ (as in~\eqref{eq:pose}) as variables.
We obtain:
\begin{equation}
\begin{aligned}
\label{eq:corrector-01}
& \underset{\MZ, \vh, \MR, \vt}{\text{Minimize}}
& & \sum_{i=1}^{n} |f_d(~\vzz_i |~\vh)|^2\\
& \text{subject to} 
&& \vzz_i = \MR \vxx_i + \vt~\forall~i \in [n].
\end{aligned}
\end{equation}
Let $\vh^\ast$ and $\MZ^{\ast}$ denote the global optima of problem~\eqref{eq:corrector-01}. 
The goal of the corrector is to ensure that the network estimates, namely, 
\begin{equation}
\label{eq:model-estimates}
	\MZ = \Phi(\calI)~~~\text{and}~~~\vh = f_e(\calI),
\end{equation}
get corrected to $\MZ^\ast$ and $\vh^\ast$. 
\begin{remark}[Correcting to Ground Truth]
Assume that the shape decoder is well trained, \ie, for any object observed at test-time there is a latent shape code $\vh$ that generates the signed distance field of the object via $f_d(\cdot~|~\vh)$. Let $\vh^\ast$ and $\MZ^{\ast}$ denote the ground-truth latent shape code and \PNC, respectively. 
Then, it is trivial to see that the corrector objective will be zero, as $f_d(\vzz^\ast_i~|~\vh^\ast) = 0$ for all $i$.
\end{remark}
Ideally, one could solve~\eqref{eq:corrector-01} to global optimality without any network estimates. 
However, the problem~\eqref{eq:corrector-01} does not have an easy global solver. 
Therefore, we use the network estimates~\eqref{eq:model-estimates} as an initialization, followed by gradient descent (GD) to obtain a solution to~\eqref{eq:corrector-01}. If the model estimates provide a good initialization, the corrector will obtain a solution that is close to the global optima.

\emph{Corrector Implementation}. First, we simplify the corrector problem~\eqref{eq:corrector-01} by relaxing the hard constraints (\ie  $\vzz_i = \MR \vxx_i + \vt$). 
In Appendix~\ref{sec:app:opt}, we show that~\eqref{eq:corrector-01} is equivalent to the bi-level optimization problem:
\begin{equation}
\begin{aligned}
\label{eq:corrector-02}
& \underset{\MZ \in \Real{3\times n}, \vh \in \Real{d}}{\text{Minimize}}
& & \sum_{i=1}^{n} |f_d(\hat{\MR} \vxx_i + \hat{\vt}|~\vh)|^2\\
& \text{subject to} 
&& (\hat{\MR}, \hat{\vt}) = \underset{(\MR, \vt)}{\text{argmin}} \sum_{i=1}^{n}\norm{\MR\vxx_i + \vt - \vz_i}^2,
\end{aligned}
\end{equation}
such that the global optima of~\eqref{eq:corrector-01} can be obtained from the global optima of~\eqref{eq:corrector-02}. 
Note that $(\hat{\MR}, \hat{\vt})$ depend on $\MZ$. Denote $F(\MZ~|~\vh) = \sum_{i=1}^{n} |f_d(\hat{\MR} \vxx_i + \hat{\vt}|~\vh)|^2$ where  $(\hat{\MR}, \hat{\vt})$ are given by the constraint in~\eqref{eq:corrector-02}.
The bi-level optimization problem can be written as:
\begin{equation}
\begin{aligned}
\label{eq:corrector-03}
& \underset{\MZ \in \Real{3\times n}, \vh \in \Real{d}}{\text{Minimize}}
& & F(\MZ~|~\vh).
\end{aligned}
\end{equation}

The corrector solves this using model estimates as initialization~\eqref{eq:model-estimates} and block coordinate descent (\PGD); see Alg.~\ref{alg:corrector}. 
We first solve~\eqref{eq:corrector-03} using gradient descent, with $\MZ$ as variable and use network estimate as initialization (Alg.~\ref{alg:corrector}, Line~\ref{algo:nocs-update}).
We then solve~\eqref{eq:corrector-03} with $\vh$ as variable (initialized with network estimate) using projected gradient descent, initialize $\MZ$ with $\hat{\MZ}$ (Alg.~\ref{alg:corrector}, Line~\ref{algo:shape-update}).
The rationale behind using projection is due to our observation that the shape decoder is well behaved within the simplex, and not outside it.

\begin{figure*}[t!]
\centering
\includegraphics[width=\linewidth]{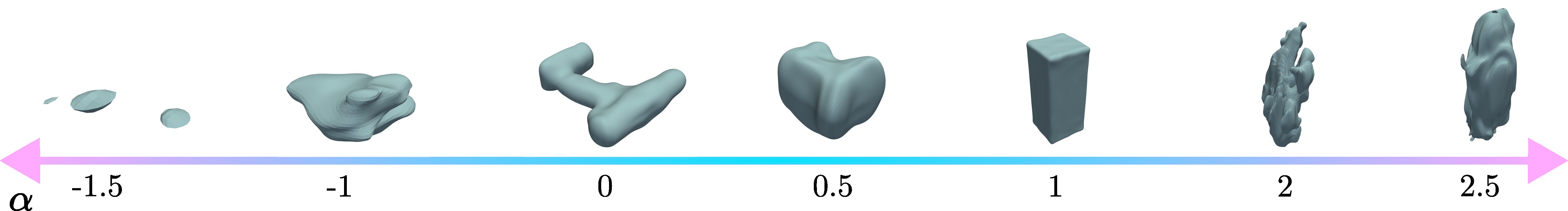}
\caption{Visualization of the mesh extracted from SDF produced by the shape decoder $f_d(\cdot~|~\vh)$ as the latent shape code takes values $\vh = \alpha \vh_1 + (1 - \alpha) \vh_2$ given two shapes codes $\vh_1$ and $\vh_2$. The trained decoder does not produce plausible shapes at extrapolation.}
\label{fig:shape-interpolation}
\vspace{-4mm}
\end{figure*}

To be more precise, let $\vh_1, \vh_2, \ldots \vh_K$, be the latent shape codes for the $K$ objects in the train set. 
Let $\vh$ take the values in the affine space
$
	\vh = \alpha_1 \vh_1 + \alpha_2 \vh_2 + \cdots \alpha_K \vh_K,	
$
with $\sum_{k=1}^{K} \alpha_k = 1$. When $\vh$ is interpolated (\ie, $1 \geq \alpha_k \geq 0$) the decoder $f_d(\cdot~|~\vh)$ is well behaved. 
When it is extrapolated, %
the decoder produces implausible object shapes (see Fig.~\ref{fig:shape-interpolation}).
Therefore, we use projected gradient descent (Line~\ref{algo:shape-update}) in Alg.~\ref{alg:corrector}. 
We project the iterates onto the simplex $\setS_K$ of all latent shape codes learnt in training~\cite{Shalev06jmlr-simplexProjection}:
\begin{equation}
\label{eq:shape-simplex}
	\setS_K = \left\{ \vh = \sum_{i=1}^{K} \alpha_k \vh_k~\Big|~\sum_{k=1}^{K} \alpha_k = 1~~\text{and}~~\alpha_k \geq 0~\right\}. \nonumber 
\end{equation}  
The \PGD algorithm is easily extended for multi-view pose and shape correction (Remark~\ref{rem:shape-multi-view}), which we implement in our experiments. 
\newlength{\textfloatsepsave} 
\setlength{\textfloatsepsave}{\textfloatsep} 
\setlength{\textfloatsep}{0pt}
\begin{algorithm}[t!]
\caption{\PGD: Block Coordinate Descent Solver for the Pose and Shape Correction~\eqref{eq:corrector-03}}
\label{alg:corrector}

$\MZ= \Phi(\calI), \vh = f_e(\calI)$\;

$\hat{\MZ} \leftarrow \underset{\MZ}{\text{argmin}}~F(\MZ~|~\vh)$ using Grad. Descent\; \label{algo:nocs-update}

$\hat{\vh} \leftarrow \underset{\vh}{\text{argmin}}~F(\hat{\MZ}~|~\vh)$; $\vh \in \setS_K = \left\{ \vh = \sum_{i=1}^{K} \alpha_k \vh_k~\Big|~\sum_{k=1}^{K} \alpha_k = 1~~\text{and}~~\alpha_k \geq 0~\right\}$ using Proj. Grad. Descent\; \label{algo:shape-update}

Return: $\hat{\vh}, \hat{\MZ}$\;

\end{algorithm}
\setlength{\textfloatsep}{\textfloatsepsave}

\begin{remark}[Multi-View Pose and Shape Corrector]
\label{rem:shape-multi-view}
	The corrector problem~\eqref{eq:corrector-01} and the \PGD algorithm is specified for a single input image, \ie, it corrects the pose and shape of an object, estimated from a single-view. However, the corrector problem and algorithm is easily modified to correct shape using estimates from multiple views. This is done by aggregating the estimated \PNC $\MZ$ across views, and using the $F(\MZ~|~\vh)$, with this aggregated $\MZ$, when updating the latent shape code $\vh$ in Line~\ref{algo:shape-update}, Algorithm~\ref{alg:corrector}. We implement both single-view and multi-view shape corrector. In the multi-view corrector, the \PNC get corrected for each view (\ie Line~\ref{algo:nocs-update}, Algorithm~\ref{alg:corrector}, remains constrained to single-view); however, using corrected shape code informed by multi-view data
	shows better results (Section~\ref{sec:expt}). 
\end{remark}

\subsection{Active Shape Decoder}
\label{sec:llsq-active-shape-decoder}
Inspired by our observation that the trained shape decoder $f_d(\cdot~|~\vh)$ is reliable only when the latent shape vector $\vh$ is in the simplex $\setS_K$, we now construct a new active shape decoder $f_a$, given $f_d$, that is reliable enough over the simplex $\setS_K$. 
This
reduces the corrector problem to a constrained linear least squares problem.

Let $\vh = f_e(\calI)$ be the latent shape code estimated by the model, and $\vh_1, \vh_2, \ldots \vh_K$ be the latent shape codes of all the CAD models used in the supervised training (Section~\ref{sec:arch-supervised-training}). We define an active shape decoder $f_a$ as:
\begin{equation}
	\label{eq:active-shape}
	f_a(\vz~|~\vc) = c_0 d_0 f_d(\vz~|~\vh) + \sum_{k=1}^{K} c_k d_k f_d(\vz~|~\vh_k),
\end{equation}
where $\vc = [c_0, c_1, \ldots c_K]\tran$ is a $(K+1)$-dimensional vector and $d_k$ are some positive constants. As in~\eqref{eq:shape-simplex}, we limit $\vc$ to be in the simplex, \ie, $\sum_{k=0}^{K} c_k = 1$ and $c_k \geq 0$.
The positive constants $d_k$ prove useful in normalizing the signed distance field
(see Appendix~\ref{sec:app:add-exp}).
Note that using latent shape code $\vh = f_e(\calI)$ as a ``basis'' in~\eqref{eq:active-shape} is essential for the approximation to work well. We 
empirically observe that using the active shape decoder without the $\vh$ leads to worse shape reconstruction performance (see Appendix~\ref{sec:app:add-exp}).

For the active shape model $f_a$, we now show that the shape correction~\eqref{eq:corrector-02}, given \PNC $\MZ$, reduces to a constrained linear least squares problem. 
This is useful as we can directly use a constrained linear least squares solver to update the shape code in \PGD (Line~\ref{algo:shape-update}, Algorithm~\ref{alg:corrector}).
First, note that, given $\MZ$, the shape estimation problem~\eqref{eq:corrector-02} for the active shape decoder $f_a$ can be written as  
\begin{equation}
\begin{aligned}
\label{eq:shape-corrector-01}
& \underset{\vc \in \Real{K+1}}{\text{Minimize}}
& & \sum_{i=1}^{n} |f_a(\vz_i~|~\vc)|^2\\
& \text{subject to}
& & \sum_{k=1}^{K} c_k = 1~~\text{and}~~c_k \geq 0.
\end{aligned}
\end{equation}
We obtain this by simply replacing the trained decoder $f_d(\cdot~|~\vh)$ with the active shape decoder $f_a$. 
Using~\eqref{eq:active-shape}, it is easy to show that the objective function in~\eqref{eq:shape-corrector-01} can be simplified as 
$
	\sum_{i=1}^{n} |f_a(\vz_i~|~\vc)|^2 = \norm{\MF(\MZ) \MD \vc}^2,
$
where $\MD = \diag{[d_0, d_1, \ldots d_K]}$ and 
\begin{equation}
	\MF(\MZ) = \left[ \begin{matrix}
		f_d(\vz_1|~\vh) & f_d(\vz_1|~\vh_1) & \cdots & f_d(\vz_1|~\vh_K) \\
		f_d(\vz_2|~\vh) & f_d(\vz_2|~\vh_1) & \cdots & f_d(\vz_2|~\vh_K) \\
		\vdots	& \vdots & & \vdots \\
		f_d(\vz_n|~\vh) & f_d(\vz_n|~\vh_1) & \cdots & f_d(\vz_n|~\vh_K) 
	\end{matrix} \right].
\end{equation}
This results in a new solver for the corrector problem~\eqref{eq:corrector-01}. This is described in Algorithm~\ref{alg:corrector-lsq} (\LSQ). We use interior point method to solve the constrained linear least squares problem in Line~\ref{algo:lsq-shape-update} of \LSQ~\cite{Diamond16cvxpy}.
\setlength{\textfloatsep}{0pt}
\begin{algorithm}[t]
\caption{\LSQ: Solver for the Pose and Shape Correction~\eqref{eq:corrector-03}.}
\label{alg:corrector-lsq}
$\MZ = \Phi(\calI), \vh = f_e(\calI)$\;
$\hat{\MZ} \leftarrow \underset{\MZ}{\text{argmin}}~F(\MZ~|~\vh)$ using Grad. Descent\; \label{algo:lsq-nocs-update}
$\hat{\vc} \leftarrow \underset{\vc \geq 0, \one\tran\vc = 1}{\text{argmin}} \norm{\MF(\hat{\MZ}) \MD \vc}^2$ using Interior Point\; \label{algo:lsq-shape-update}
$\hat{\vh} \leftarrow \hat{c}_0 d_0 \vh + \sum_{k=1}^{K} \hat{c}_k d_k \vh_k$\; 
Return: $\hat{\vh}$, $\hat{\MZ}$\;
\end{algorithm}
\setlength{\textfloatsep}{\textfloatsepsave}

\subsection{Shape Degeneracy Check}
\label{sec:llsq-degeneracy}
The linear least square shape corrector~\eqref{eq:shape-corrector-01} informs a degeneracy check on the shape estimator, i.e., a check if the shape estimation is unique or not.
From the shape corrector~\eqref{eq:shape-corrector-01} objective, we have
\begin{equation}
	\norm{\MF(\MZ) \MD \vc}^2 = \vc\tran\MD \MF(\MZ)\tran \MF(\MZ) \MD \vc. 
\end{equation}
It follows that if the matrix $\MF(\MZ)\tran \MF(\MZ)$ is singular then the shape corrector problem~\eqref{eq:shape-corrector-01} will have multiplicity of solutions. Thus for a given input, we can check the singularity of the matrix $\MF(\MZ)\tran \MF(\MZ)$ to determine shape estimation uniqueness.
Fig.~\ref{fig:degen} plots the minimum eigenvalue $\lambda_\min$ of the matrix $\MF(\MZ)\tran \MF(\MZ)$ as a function of keyframes $N$. 
In the example, as the handle of the mud begins to show up,
the minimum eigenvalue $\lambda_\min$ increases drastically. 
\begin{figure}[t!]
	\centering
	\includegraphics[width=0.9\linewidth]{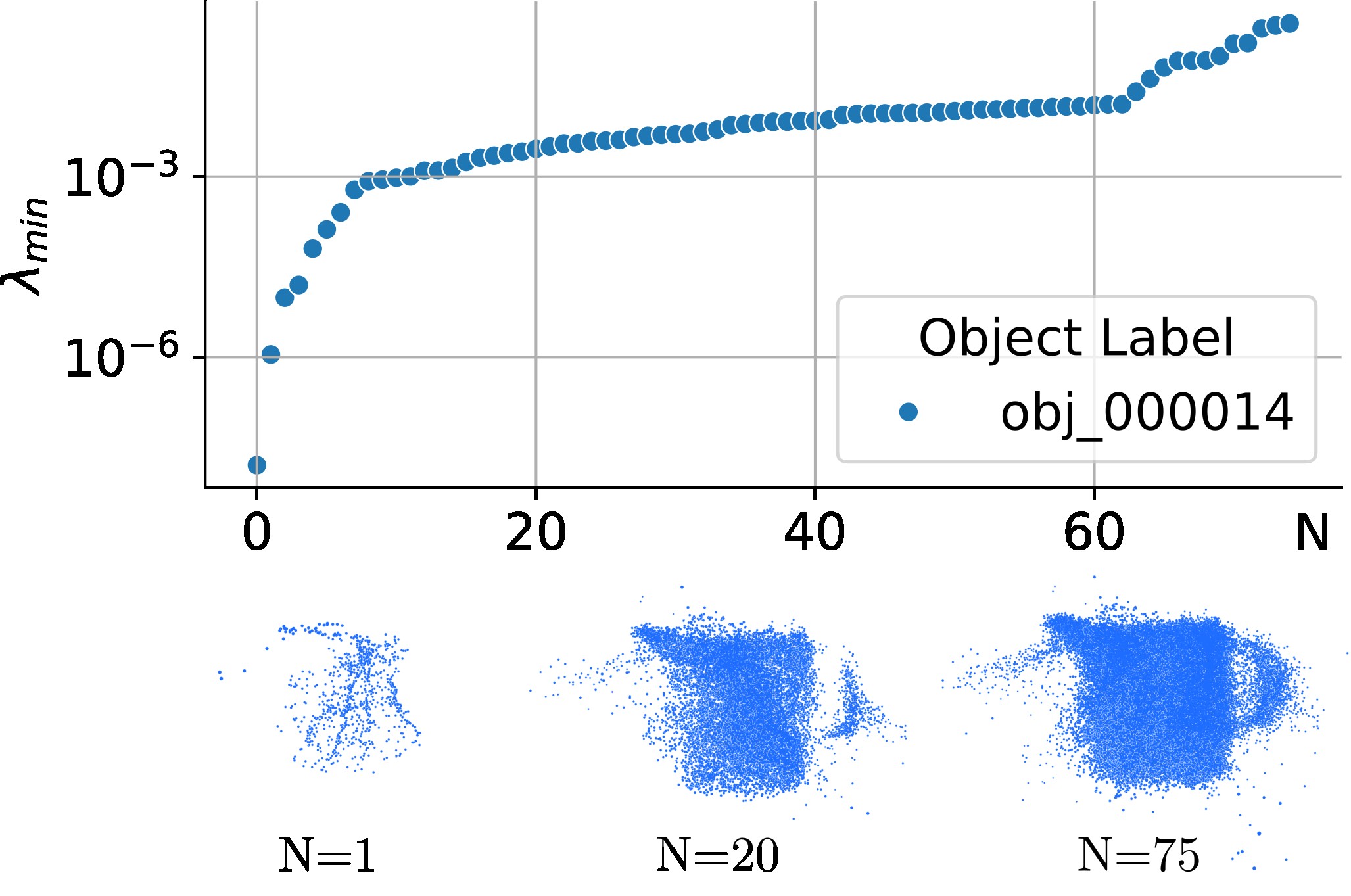}
	\caption{The minimum eigenvalue of the matrix $\MF(\MZ)\tran \MF(\MZ)$ as a function of keyframes $N$. Each keyframe captures the mug from a different viewing angle. $\MF(\MZ)$ is computed using the estimated \PNC $\MZ$, aggregated over all keyframes till $N$. }
	\label{fig:degen}
	\vspace{-5mm}
\end{figure}

\section{Self-Training for Test-Time Adaptation}
\label{sec:st}
We now describe \pipelineNameST, the self-training algorithm 
that self-trains \pipelineName at test-time with pesudo-labels. 
While self-training with pseudo-labels is not new~\cite{Talak23tro-c3po, Shi23rss-ensemble},
to the best of our knowledge, this is the first time that it has been applied to simultaneous object pose and shape estimation.
We now describe the three steps in \pipelineNameST: \emph{correction}, \emph{certification}, and \emph{self-training}.

\emph{Correction.} We correct the outputs~\eqref{eq:model-estimates} using the corrector (Section~\ref{sec:arch-corrector}) to produce refined estimates $\hat{\MZ}$ and $\hat{\vh}$.

\emph{Certification.} We implement observable correctness certificate checks to assert the quality of the estimates from the corrector. We do so by testing geometric consistency of the corrected estimates with the observed depth. 
 If $(\hat{\MR}, \hat{\vt})$ denote the corrected pose (see~\eqref{eq:corrector-02}), then the value $|f_d(\hat{\MR}\vxx_i + \hat{\vt}~|~\vh)|$
 must be low, barring outliers in the point cloud $\MX$. Intuitively, this ensures that the pose-inverted depth is consistent with the implicit shape representation $f_d(\cdot \vert \vh)$.Therefore, we define the observably correctness certificate (\ocx) as:
\begin{equation}
\label{eq:oc-cert}
    \texttt{oc}(\hat{\MZ}, \hat{\vh}) = \indicator{ \left[\lvert f_d(\hat{\vz}_i \vert \hat{\vh})) \rvert \right]_p < \epsilon}
\end{equation}
where $[\cdot ]_{p}$ is the $p$-th quantile function \wrt $i$. If the corrected estimates $(\hat{\MZ}, \hat{\vh})$ pass the \ocx check, then we self-annotate the image $\calI$ with the corrected estimates. 

\emph{Self-training.} We train the model using the self-annotated images with the MSE training loss on both the latent shape code and the \PNC:  
\begin{align}
    \label{eq:self-training-loss}
   L_h = \lVert \hat{\vh} - \vh \rVert^2 \qquad & L_z = \sum_{i=1}^{n} \lVert \hat{\vz_i} - \vz_i \rVert^2.
\end{align}
As the self-training progresses, more and more corrected estimates pass the \ocx check, resulting in more pseudo-labels. Note that the shape decoder is frozen during self-training.

\emph{Implementation Details}. The self-training pipeline operates over a buffer of images, including different views for a scene. \pipelineName operates on every object detection on each image. We implement a multi-view corrector that inputs data associations of objects in different images. We also report the results of using a single-view corrector in the ablation study (Sec.~\ref{sec:expt-ablations}).

\section{Experiments}
\label{sec:expt} 
We evaluate our approach with three datasets ranging from household objects to satellites:
(1) YCBV~\cite{Hodan20eccvw-BOPChallenge}, 
where the test and train objects are the same; 
(2) SPE3R~\cite{Park24aiaa-spe3r}, where the test objects are unknown;
(3) NOCS, 
a large-scale dataset with unknown test objects~\cite{Wang19-normalizedCoordinate}. 
We implement our models in PyTorch~\cite{Paszke19neurips-pytorch}.
We use DINOv2~\cite{Oquab23arxiv-dinov2} as the ViT backbone and keep it frozen during training.
All supervised models are trained with 8 Nvidia V100 GPUs,
while self-training experiments are conducted on one Nvidia A6000 GPU.
For tables, we use bold fonts for the best results and color the top three results with \tabfirstbox, \tabsecondbox~and \tabthirdbox, respectively. 
Additional details are provided in Appendix~\ref{sec:app:training-details}.

\subsection{YCBV Dataset}
\label{sec:expt-ycbv}

\emph{Setup.} %
We evaluate \pipelineName and our self-training approach against baselines on the YCBV dataset~\cite{Hodan20eccvw-BOPChallenge,Xiang17rss-posecnn}.
We train two models: \pipelineNameReal where we use the real-world images in the train set provided by the dataset, and \pipelineNameSyn where we train \pipelineName using 4200 (200 images per object) images rendered using BlenderProc~\cite{Denninger20rss-blenderproc} (see Appendix~\ref{sec:app:add-exp} for sample images).
The reasoning behind using synthetic images is to evaluate test-time adaptation under a large sim-to-real gap.
We train \pipelineNameReal with an Adam optimizer with a learning rate of $3 \times 10^{-4}$ for 50 epochs,
and \pipelineNameSyn with the same optimizer and learning rate for 500 epochs.
In addition, we self-train \pipelineNameSyn for 5 epochs,
using both \PGD Alg.~\ref{alg:corrector} (\pipelineNameSynPGDAdapt) and \LSQ Alg.~\ref{alg:corrector-lsq} (\pipelineNameSynLSQAdapt).

For pose estimation, we compare against baselines including CosyPose~\cite{Labbe20eccv-CosyPose}, BundleSDF~\cite{Wen23cvpr-bundlesdf} and GDRNet++~\cite{Wang21cvpr-GDRNetGeometryGuided, Liu22eccvw-gdrnppBOP}.
For shape reconstruction, we compare against Shap-E~\cite{Jun23arxiv-shap-e}, a state-of-the-art conditional diffusion generative model.
We report ADD-S metrics~\cite{Xiang17rss-posecnn} and the Chamfer distance between ground truth model and reconstructed shape (labeled as $e_{shape}$).
We also compute the area under curve (AUC) metric for both ADD-S and $e_{shape}$~\cite{Xiang17rss-posecnn}.
All methods use the same ground truth segmentation masks.

\begin{table}[t!]
    \footnotesize
    \centering
\begin{tabular}{llllll}\toprule
                   & \multicolumn{2}{c}{\small $e_{shape}$ $\downarrow$} & \multicolumn{3}{c}{\small $e_{shape}$ (AUC) $\uparrow$} \\ \cmidrule(lr){2-3} \cmidrule(lr){4-6}
Method                                  & \small Mean            & \small Median          & \small 3 cm          & \small 5 cm         & \small 10 cm       \\ \midrule
\footnotesize Shap-E~\cite{Jun23arxiv-shap-e}                           & 0.099           & 0.052           & 0.05        & 0.17        & 0.43       \\ 
\footnotesize \texttt{CRISP-Syn}               & 0.045           & 0.032           & 0.18         & 0.35        & 0.58       \\ 
\footnotesize \makecell[l]{\texttt{CRISP-Syn-ST} \\ \texttt{(LSQ)}} & \tabsecond 0.037           & \tabsecond 0.024           & \tabsecond 0.25         & \tabsecond 0.43        & \tabsecond 0.65       \\ 
\footnotesize \makecell[l]{\texttt{CRISP-Syn-ST} \\ \texttt{(BCD)}} & \tabthird 0.039           & \tabthird 0.028           & \tabthird 0.19        & \tabthird 0.39        & \tabthird 0.62       \\ 
\footnotesize \texttt{CRISP-Real}              & \tabfirst 0.026 & \tabfirst 0.016  & \tabfirst 0.40  & \tabfirst 0.58 & \tabfirst 0.75       \\ \bottomrule
\end{tabular}
\caption{Evaluation results on the $e_{shape}$ and $e_{shape}$ (AUC) metrics for the YCBV dataset.}
\label{tab:ycbv-shape-sota}
\vspace{-4mm}
\end{table}

\begin{table}[t!]
    \footnotesize
    \centering
\begin{tabular}{llllll}\toprule
                   & \multicolumn{2}{c}{ADD-S $\downarrow$} & \multicolumn{3}{c}{ADD-S (AUC) $\uparrow$} \\ \cmidrule(lr){2-3} \cmidrule(lr){4-6}
Method             & Mean        & Median      & 1 cm       & 2 cm      & 3 cm      \\\midrule
\footnotesize CosyPose~\cite{Labbe20eccv-CosyPose}    & \tabfirst 0.010 & \tabsecond 0.007                 & \tabsecond 0.30  & \tabsecond 0.56     & \tabsecond 0.68     \\
\footnotesize BundleSDF~\cite{Wen23cvpr-bundlesdf}        & 0.014           & 0.012                 & 0.14                & 0.37     & 0.55     \\
\footnotesize GDRNet++~\cite{Liu22eccvw-gdrnppBOP}         & \tabthird 0.013 & \tabthird 0.011                 & 0.22                & 0.43     & \tabthird 0.58     \\
\footnotesize \texttt{CRISP-Syn} & 0.020  & 0.015 & 0.12 & 0.30     & 0.45     \\
\footnotesize \makecell[l]{\texttt{CRISP-Syn-ST} \\ \texttt{(LSQ)}} & 0.016 & 0.010                 & \tabthird 0.24                & \tabthird 0.44     & 0.56     \\
\footnotesize \makecell[l]{\texttt{CRISP-Syn-ST} \\ \texttt{(BCD)}} & 0.017 & 0.010                 & 0.22                & 0.42     & 0.54     \\ 
\footnotesize \texttt{CRISP-Real} & \tabsecond 0.011  & \tabfirst 0.005       & \tabfirst 0.41      & \tabfirst 0.62     & \tabfirst 0.73     \\\bottomrule
\end{tabular}
\caption{Evaluation results on the ADD-S and ADD-S (AUC) metrics for the YCBV dataset.}
\label{tab:ycbv-adds-sota}
\vspace{-5mm}
\end{table}

\begin{figure}[t!]
    \vspace{1mm}
  \centering
  \includegraphics[width=0.9\linewidth]{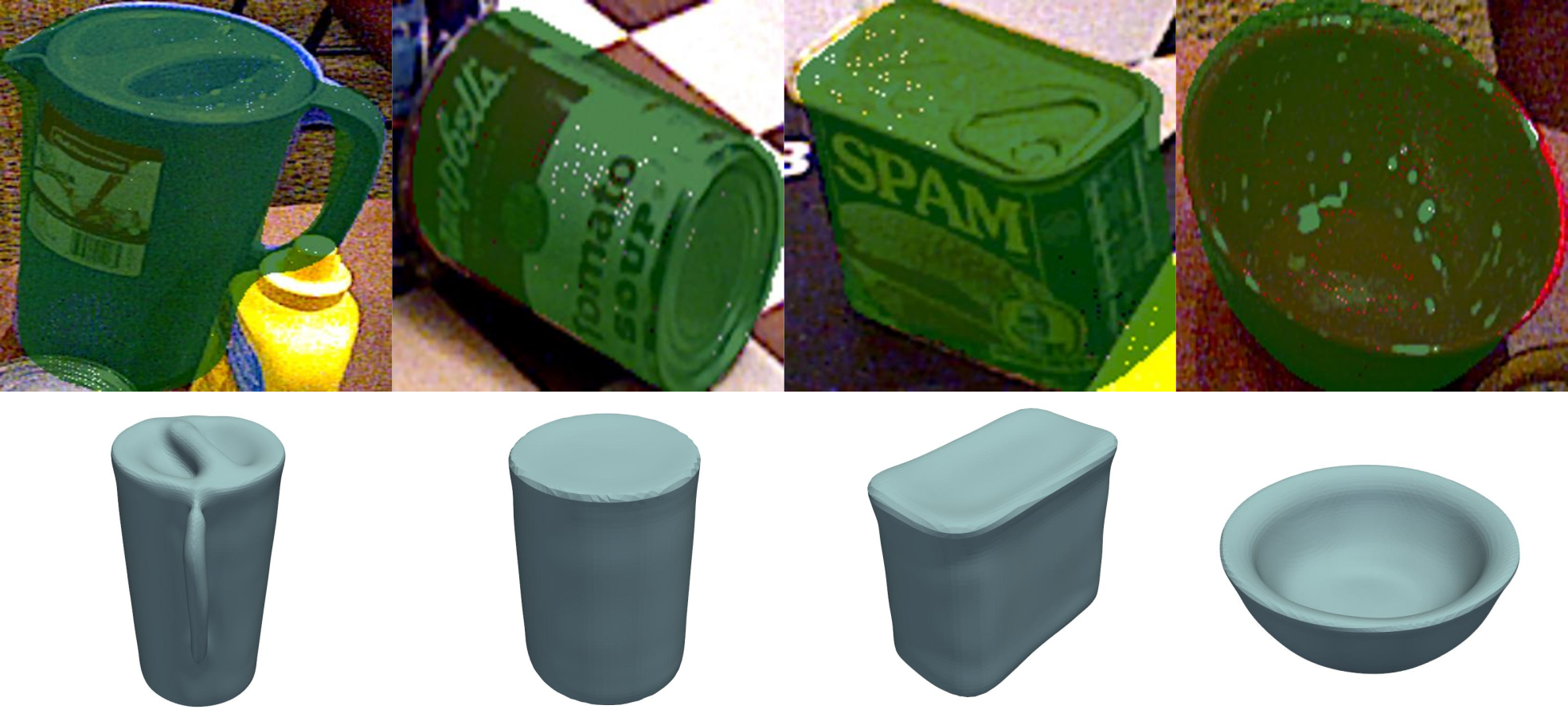}
  \caption{Qualitative examples of \pipelineName on the YCBV dataset. \emph{Top}: projection of transformed reconstructed mesh with our estimation. \emph{Bottom:} reconstructed mesh. See Appendix for more examples.}
  \label{fig:ycbv-qualitative}
\end{figure}

\emph{Results.} 
Tab.~\ref{tab:ycbv-shape-sota} reports shape reconstruction performances.
\pipelineNameReal performs the best, with the mean $e_{shape}$ of 0.026, significantly outperforming Shap-E ($0.099$) by 73\%.
In addition, self-training improves \pipelineNameSyn's performance for all metrics. 
Compared to \pipelineNameSyn, \pipelineNameSynLSQAdapt and \pipelineNameSynPGDAdapt improve on mean $e_{shape}$ by 17\% and 12\% and $e_{shape}$ (AUC) (5 cm) by 22\% and 11\% respectively.
\pipelineNameSynLSQAdapt slightly outperforms \pipelineNameSynPGDAdapt on the $e_{shape}$ (AUC) metric at 5 and 10 cm.
\pipelineNameReal also achieves the best performance at all ADD-S (AUC) metrics, 
with CosyPose trailing behind.
Our self-training approach improves \pipelineNameSyn's performance, 
with \pipelineNameSynLSQAdapt and \pipelineNameSynPGDAdapt improve on mean ADD-S by 20\% and 17\% respectively and ADD-S (AUC) (2 cm) by 44\% and 38\% respectively.
GDRNet++ achieves slightly better mean ADD-S then \pipelineNameReal ($0.010$ vs. $0.011$).
We show some qualitative examples of \pipelineName in Fig.~\ref{fig:ycbv-qualitative}.

\begin{figure}[t!]
\begin{subfigure}{.23\textwidth}
	\centering
  \includegraphics[width=\linewidth]{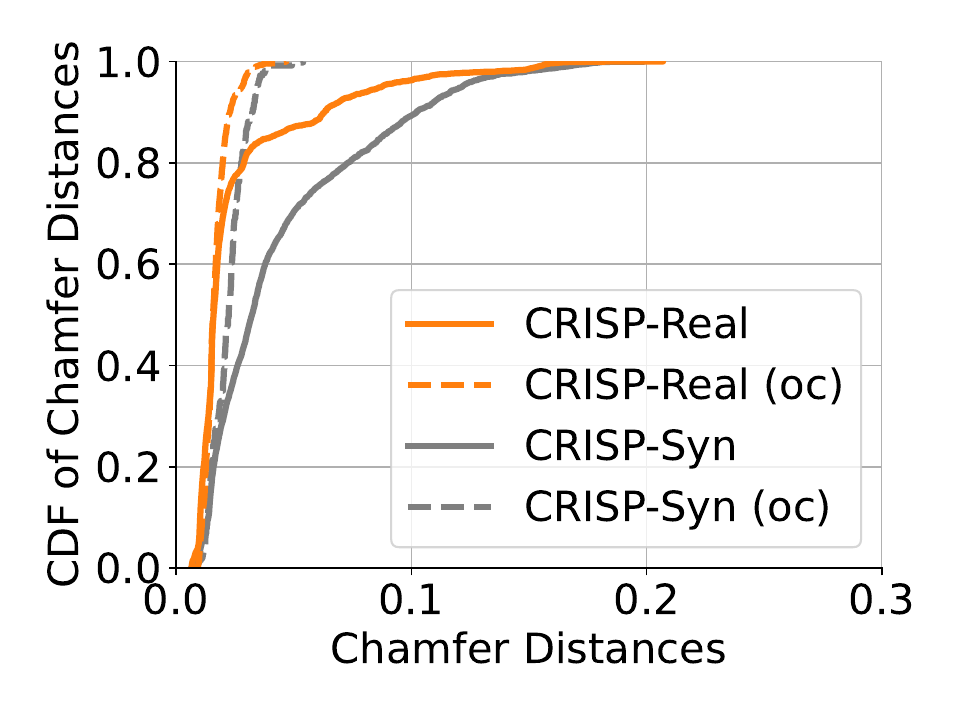}
	\caption{}	
	\label{fig:ycbv-oc-shape-cdf}
\end{subfigure}%
~
\begin{subfigure}{.23\textwidth}
  \centering
 	\includegraphics[width=\linewidth]{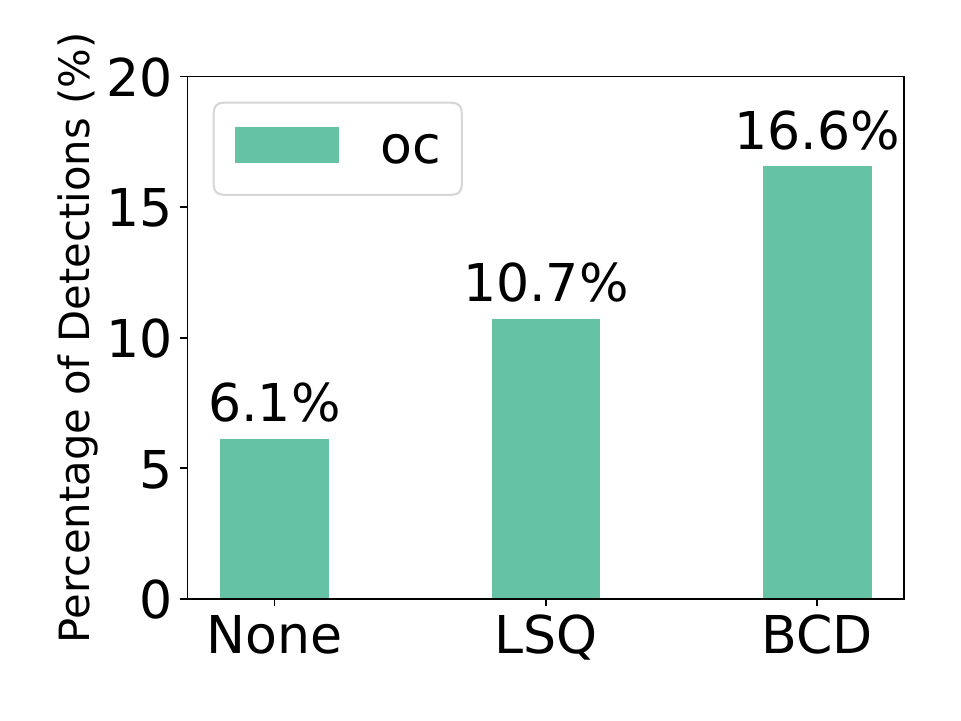}
  \caption{}
  \label{fig:ycbv-corrector-oc}
\end{subfigure}
\caption{(a) Cumulative distribution function of the Chamfer distances of the reconstructed shapes. Methods append with (\texttt{oc}) indicate the instances pass \texttt{oc}. 
(b) Comparison of percentages of test instances which pass \texttt{oc} (label: \texttt{oc}).
}
\vspace{-4mm}
\end{figure}

To understand why self-training works, Fig.~\ref{fig:ycbv-oc-shape-cdf} shows the CDFs of $e_{shape}$ of \pipelineName and \pipelineNameSyn with and without \texttt{oc} check applied.
With \texttt{oc}, the CDF curves are shifted to the far left,
indicating that the check 
filters out instances with high $e_{shape}$ without access to ground truth.
By collecting pseudo-labels using the \texttt{oc} check, we are collecting good samples that have low $e_{shape}$, enabling self-training. 
Fig.~\ref{fig:ycbv-corrector-oc} shows the impact of correctors in self-training. 
Without corrector, \pipelineNameSyn only 6.1\% of the outputs pass \texttt{oc}.
With LSQ (Alg.~\ref{alg:corrector-lsq}) and BCD (Alg.~\ref{alg:corrector}), the percentage increases to 10.7\% and 16.6\%, respectively.

\subsection{SPE3R Dataset}
\label{sec:expt-satellite}

\emph{Setup.} %
We evaluate \pipelineName on the SPE3R dataset~\cite{Park24aiaa-spe3r},
which contains photorealistic renderings of 64 real-world satellites.
We use the official train set, which contains 57 satellites (800 images per satellite).
Seven satellites are withheld for testing.
We use an Adam optimizer with a learning rate of $10^{-4}$, a batch size of 16 and for 100 epochs (label: \pipelineName).
In addition, we self-train \pipelineName for 10 epochs on the test set, using only the \PGD corrector (Alg.~\ref{alg:corrector}) due to the unknown test objects (label: \pipelineNameST).
Following~\cite{Park24aiaa-spe3r}, we report 
$L_1$ Chamfer distance between ground-truth model and reconstructed shape (label: $e^{L_1}_{shape}$) and the $L_1$ Chamfer distance between the ground-truth transformed model and the reconstructed model transformed using estimated pose (label: $e^{L_1}_{pose}$).

\begin{table}[t!]
	\small
\centering
 \begin{tabular}{lllll} 
	\toprule
 Method & \multicolumn{2}{c}{$e^{L_1}_{shape} \downarrow$} & \multicolumn{2}{c}{$e^{L_1}_{pose} \downarrow$} \\ \cmidrule(lr){2-3} \cmidrule(lr){4-5}
 			 & Mean & Median & Mean & Median \\ \midrule
 Shap-E~\cite{Jun23arxiv-shap-e} &0.183 	&0.171 	&\textendash	&\textendash 	\\ 
 SQRECON~\cite{Park24aiaa-spe3r} 	& \tabsecond 0.145 	& \tabsecond 0.134 	& \tabthird 0.304	& \tabthird 0.258 	\\ 
 \pipelineName  					& \tabthird 0.163	    & \tabthird 0.159	   & \tabsecond 0.292 	& \tabsecond 0.211	\\
 \pipelineNameST 	& \tabfirst 0.141 	& \tabfirst 0.125 	& \tabfirst 0.224 	& \tabfirst 0.168 \\
 \bottomrule
 \end{tabular}
 \caption{Evaluation of \pipelineName and \pipelineNameST on the SPE3R dataset.}
 \label{tab:spe3r-results}
 \vspace{-5mm}
\end{table}

\emph{Results.} %
As shown in Tab.~\ref{tab:spe3r-results},
\pipelineName outperforms SQRECON and Shap-E in terms of mean and median pose error $e^{L_1}_{pose}$, whereas SQRECON achieves better $e^{L_1}_{shape}$.
With self-training, \pipelineNameST outperforms SQRECON on all metrics, improving mean $e^{L_1}_{shape}$ by 13\% and $e^{L_1}_{pose}$ by 23\%.  

Fig.~\ref{fig:sat-shape-adaptation} shows a qualitative example of self-training improving shape reconstruction performance. This satellite, \texttt{smart-1}, is unseen during training.
Initially \pipelineName recalls the wrong shape, similar to the \texttt{integral} satellite in the train set (middle of Fig.~\ref{fig:sat-shape-adaptation}).
After self-training, the model is able to reconstruct a model (bottom of Fig.~\ref{fig:sat-shape-adaptation}) much closer to the ground truth model (top of Fig.~\ref{fig:sat-shape-adaptation}).

\begin{figure}[t]
\centering
  \includegraphics[width=0.9\linewidth]{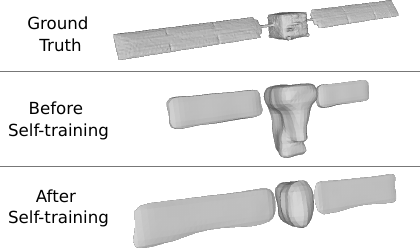}
\caption{Qualitative example of self-training improving shape reconstruction.}
\label{fig:sat-shape-adaptation}
\vspace{-6mm}
\end{figure}

	\subsection{NOCS Dataset}
\label{sec:expt-nocs}

\begin{table}[t!]
    \centering
    \small
    \begin{tabular}{llllll}
    \toprule
    & \multicolumn{5}{c}{$e_{shape}$ (mm) $\downarrow$ } \\\cmidrule(lr){2-6}
    Methods       & \small Bottle & \small Camera  & \small Laptop & \small Mug  & \small Avg  \\
    \midrule
    SPD~\cite{Tian20eccv-SPD}   & 3.44             & 8.89              & 2.91             & 1.02             & 3.17 \\
    SGPA~\cite{Chen21iccv-SGPA}          & 2.93             & \tabthird 5.51    & \tabthird 1.62   & 1.12             & 2.44 \\
    CASS~\cite{Chen20iccv-learningCanonicalShape}      & \tabsecond 0.75  & \tabsecond 0.77   & 3.73              & \tabsecond 0.32 & \tabsecond 1.06 \\
    RePoNet~\cite{Ze22neurips-wild6d}       & \tabthird 1.51   & 8.79              & \tabsecond 1.01   & \tabthird 0.94  & \tabthird 2.37 \\
    \pipelineName & \tabfirst 0.55   &  \tabfirst 0.37   &  \tabfirst 0.51   &  \tabfirst 0.16 &  \tabfirst 0.35 \\ \bottomrule
    \end{tabular}
    \caption{Shape reconstruction results on the \NOCSREALTwoSevenFive dataset. Tabulates average Chamfer distance (mm) for four categories and the average distance for all categories (full table in Appendix~\ref{sec:app:add-exp}).}
    \label{tab:nocs-shape}
    \vspace{-4mm}
    \end{table}

\emph{Setup.} %
    We evaluate \pipelineName against baselines on the \NOCS dataset~\cite{Wang19-normalizedCoordinate}. 
    The \NOCS dataset is divided into two subsets: \NOCSCAMERA, a synthetic dataset with 300K rendered images of ShapeNet objects,
    and \NOCSREAL with 8K RGB-D frames of 18 different 
    real-world scenes.%
    We train \pipelineName on the train sets of \NOCSCAMERA and \NOCSREAL for 50 epochs, using an Adam optimizer with a learning rate of $10^{-4}$ and a
     cosine annealing scheduler with restarts per 4K iterations.
    For evaluation, we report results on the \NOCSREALTwoSevenFive,
    which is the test set of \NOCSREAL containing 2.75K images.
    We report $e_{shape}$, the average Chamfer distance between the reconstructed shape and the ground truth CAD model, and mean average precision (mAP) at different thresholds, following the metrics proposed by~\cite{Wang19-normalizedCoordinate}.

\begin{table}[t!]
    \centering
    \small
    \begin{tabular}{llllll}
        \toprule
       & &  \multicolumn{4}{c}{mAP $\uparrow$ } \\\cmidrule(lr){3-6}
    &Methods          & $IoU_{50}$ & $IoU_{75}$ & \makecell{\scriptsize $5^{\circ}$ \\ $5 \mathrm{~cm}$}& \makecell{\scriptsize $10^{\circ}$ \\ $5 \mathrm{~cm}$}\\ \midrule
    \parbox[t]{2mm}{\multirow{7}{*}{\rotatebox[origin=c]{90}{Category-level}}} &NOCS~\cite{Wang19-normalizedCoordinate}      & 78.0            & 30.1           & 10.0            & 25.2        \\
    &Metric Scale~\cite{Lee21ral-metricScale}     & 68.1            & \textendash    & 5.3             & 24.7        \\
    &SPD~\cite{Tian20eccv-SPD}       & 77.3            & 53.2           & 21.4            & 54.1        \\
    &CASS~\cite{Chen20iccv-learningCanonicalShape}     & 77.7            & \textendash    & 23.5            & 58.0          \\
    &SGPA~\cite{Chen21iccv-SGPA}      & \tabthird 80.1  & \tabsecond 61.9& \tabsecond 39.6 & \tabfirst 70.7        \\
    &RePoNet~\cite{Ze22neurips-wild6d}          & \tabsecond 81.1 & \textendash    & \tabfirst 40.4  & \tabsecond 68.8 \\
    &SSC-6D~\cite{Peng22aaai-selfSupPoseShape}  & 73.0            & \textendash    & 19.6            & 54.5        \\ \midrule
    \parbox[t]{5mm}{\multirow{3}{*}{\rotatebox[origin=c]{90}{\makecell{Category\\-agnostic}}}} &DualPoseNet~\cite{Lin21iccv-dualposenet}    & 76.1            & 55.2           & \tabfirst 31.3  & 60.4        \\
    &FSD~\cite{Lunayach24icra-FSD}      & 77.4            & \tabsecond 61.9& 28.1            & 61.5        \\
    &\pipelineName    & \tabfirst 83.5  & \tabfirst 70.5 & 22.5            & \tabthird 62.8       \\ \bottomrule
    \end{tabular}
    \caption{Pose estimation results on the \NOCSREALTwoSevenFive dataset. We show here the average mAP at different thresholds for 3D IoU (IoU$_{50}$ and IoU$_{75}$) and rotation and translation errors at $5^{\circ} ~5 \mathrm{~cm}$ and $10^{\circ} ~5 \mathrm{~cm}$ thresholds (full table in Appendix~\ref{sec:app:add-exp}).}
    \label{tab:nocs-pose}
    \vspace{-4mm}
    \end{table}

\emph{Results.} %
Tab.~\ref{tab:nocs-shape} shows the shape reconstruction error 
for various baselines.
\pipelineName outperforms the baselines in all categories, achieving a 67\% improvement over the next best baseline, CASS. %
In the challenging object category of camera, \pipelineName achieves a 95\% reduction comparing to SPD~\cite{Tian20eccv-SPD}, and a 52\% reduction comparing to CASS.  
Tab.~\ref{tab:nocs-pose} reports the pose estimation errors of methods tested. %
We separate out category-level methods from category-agnostic ones.
The former either train one model per category, use explicit shape priors, or hard-code the number of categories in the network architecture.
The latter train one model for all categories and do not hard code the number of categories in the architecture, which are more scalable.
\pipelineName achieves the best performance in terms of mAP for $IoU_{50}$ and $IoU_{75}$, 
while ranked third in terms of mAP at $10^{\circ}~5 \mathrm{~cm}$ threshold.
We note that \pipelineName's rotation error is higher than category-level baselines such as RePoNet~\cite{Ze22neurips-wild6d} and SGPA~\cite{Chen21iccv-SGPA}.

	\subsection{Ablation Studies}
\label{sec:expt-ablations}

\begin{table}[t!]
\small
\centering
\begin{tabular}{ccc}
\toprule
Ablations & ADD-S (AUC) $\uparrow$ & $e_{shape}$ (AUC) $\uparrow$ \\
& \multicolumn{1}{c}{2 cm} & \multicolumn{1}{c}{5 cm}\\ \midrule
\makecell[c]{No Projection  $\Pi$ \\~(Line~\ref{algo:shape-update}, \PGD)}  & 0.27    &  0.02   \\ 
Single-View                & \tabsecond 0.37  & 0.29 \\
No Corrector               & \tabthird 0.30  & \tabthird 0.35 \\
\makecell[c]{Direct \\Self-training Loss}  & 0.29  & \tabsecond 0.38  \\ \midrule
Proposed                   & \tabfirst 0.42  & \tabfirst 0.39  \\\bottomrule
\end{tabular}
\caption{Ablation studies on test-time adaptation for \pipelineName}
\label{tab:ablations}
\vspace{-5mm}
\end{table}

We ablate a few design decisions to see their effects on self-training performance (Tab.~\ref{tab:ablations}).
\begin{wraptable}{r}{0pt}
    \footnotesize
    \begin{tabular}{llll}
    \toprule
              & \multicolumn{3}{c}{Runtime (ms) $\downarrow$} \\ \cmidrule{2-4}
    Component & Mean     & Median     & SD       \\ \midrule
    \pipelineName  & 125      & 127        & 161      \\
    LSQ       & 251      & 214        & 142      \\
    BCD (25)  & 3824     & 3483       & 1667     \\
    BCD (50)  & 7630     & 6942       & 3353     \\ \bottomrule
    \end{tabular}
    \caption{Runtime comparisons of \pipelineName, LSQ (Alg.~\ref{alg:corrector-lsq}) and BCD (Alg.~\ref{alg:corrector}) with 25 and 50 maximum iterations.}
    \label{tab:crisp-runtime}
\end{wraptable}
Without the projection in Line~\ref{algo:shape-update}, in the \PGD Algorithm~\ref{alg:corrector}, $e_{shape}$ drops significantly (Row 1, Tab.~\ref{tab:ablations}). 
Restricting the corrector to single-view (Row 2, Tab.~\ref{tab:ablations})
and self-train without the corrector by directly certifying and self-training \pipelineName outputs (Row 3, Tab.~\ref{tab:ablations}) adversely affects both shape and pose estimation performance.
Lastly, self-training without any pseudo-labels, but with a direct self-supervision loss similar to~\cite{Peng22aaai-selfSupPoseShape} achieves similar $e_{shape}$ but worse ADD-S (AUC) metric (Row 4, Tab.~\ref{tab:ablations}).
Tab.~\ref{tab:crisp-runtime} shows the runtime of \pipelineName and correctors.
LSQ (Alg.~\ref{alg:corrector-lsq}) achieves significantly faster runtime than BCD (Alg.~\ref{alg:corrector}).

\section{Limitations and Future Work}
\label{sec:limitations}
We currently assume ground-truth object data association during test-time.
A future direction is to investigate how tracking errors affect self-training and how to mitigate them.
Another limitation is the use of latent shape codes for projection in the corrector, which scales linearly with the number of objects learned.
One potential solution is to use dimensionality reduction techniques such as PCA.
This will open up opportunities to self-train \pipelineName on larger datasets,
potentially enabling it as a foundation model for object pose and shape estimation that can perform test-time adaptation.

\section{Conclusion}
\label{sec:conclusion}
We introduce \pipelineName, a category-agnostic object pose and shape estimation pipeline, and a self-training method \pipelineNameST to bridge potential domain gaps.
\pipelineName is fast at inference, amenable for multi-view pose and shape estimation and can be incorporated into real-time robotic perception pipeline for downstream tasks such as manipulation and object-SLAM. 
Experiments on the YCBV, SPE3R, and NOCS datasets show that \pipelineName is competitive in pose estimation and outperforms baselines in shape reconstruction.

{
\small
\bibliographystyle{ieeenat_fullname}
\bibliography{myRefs,../../references/refs}

\begin{thebibliography}{48}
\providecommand{\natexlab}[1]{#1}
\providecommand{\url}[1]{\texttt{#1}}
\expandafter\ifx\csname urlstyle\endcsname\relax
  \providecommand{\doi}[1]{doi: #1}\else
  \providecommand{\doi}{doi: \begingroup \urlstyle{rm}\Url}\fi

\bibitem[Arun et~al.(1987)Arun, Huang, and Blostein]{Arun87pami}
K.S. Arun, T.S. Huang, and S.D. Blostein.
\newblock Least-squares fitting of two 3-{D} point sets.
\newblock \emph{{IEEE} Trans. Pattern Anal. Machine Intell.}, 9\penalty0
  (5):\penalty0 698--700, 1987.

\bibitem[Brazil et~al.(2023)Brazil, Kumar, Straub, Ravi, Johnson, and
  Gkioxari]{Brazil23cvpr-omni3d}
Garrick Brazil, Abhinav Kumar, Julian Straub, Nikhila Ravi, Justin Johnson, and
  Georgia Gkioxari.
\newblock Omni3d: A large benchmark and model for 3d object detection in the
  wild.
\newblock In \emph{IEEE Conf. on Computer Vision and Pattern Recognition
  (CVPR)}, pages 13154--13164, 2023.

\bibitem[Chan et~al.(2021)Chan, Monteiro, Kellnhofer, Wu, and
  Wetzstein]{Chan21cvpr-PiGAN}
Eric~R Chan, Marco Monteiro, Petr Kellnhofer, Jiajun Wu, and Gordon Wetzstein.
\newblock pi-gan: Periodic implicit generative adversarial networks for
  3d-aware image synthesis.
\newblock In \emph{IEEE Conf. on Computer Vision and Pattern Recognition
  (CVPR)}, pages 5799--5809, 2021.

\bibitem[Chen et~al.(2019)Chen, Cao, Parra, and
  Chin]{Chen19ICCVW-satellitePoseEstimation}
Bo Chen, Jiewei Cao, Alvaro Parra, and Tat-Jun Chin.
\newblock Satellite pose estimation with deep landmark regression and nonlinear
  pose refinement.
\newblock In \emph{2019 IEEE/CVF International Conference on Computer Vision
  Workshop (ICCVW)}, pages 2816--2824. IEEE, 2019.

\bibitem[Chen et~al.(2020)Chen, Li, Wang, and
  Xu]{Chen20iccv-learningCanonicalShape}
Dengsheng Chen, Jun Li, Zheng Wang, and Kai Xu.
\newblock Learning canonical shape space for category-level 6d object pose and
  size estimation.
\newblock In \emph{IEEE Conf. on Computer Vision and Pattern Recognition
  (CVPR)}, pages 11973--11982, 2020.

\bibitem[Chen and Dou(2021)]{Chen21iccv-SGPA}
Kai Chen and Qi Dou.
\newblock Sgpa: Structure-guided prior adaptation for category-level 6d object
  pose estimation.
\newblock In \emph{Intl. Conf. on Computer Vision (ICCV)}, pages 2773--2782,
  2021.

\bibitem[Deng et~al.(2024)Deng, Lu, and Zhang]{Deng24cvpr-unsupervised}
Jiacheng Deng, Jiahao Lu, and Tianzhu Zhang.
\newblock Unsupervised template-assisted point cloud shape correspondence
  network.
\newblock In \emph{IEEE Conf. on Computer Vision and Pattern Recognition
  (CVPR)}, pages 5250--5259, 2024.

\bibitem[Denninger et~al.(2020)Denninger, Sundermeyer, Winkelbauer, Olefir,
  Hodan, Zidan, Elbadrawy, Knauer, Katam, and
  Lodhi]{Denninger20rss-blenderproc}
Maximilian Denninger, Martin Sundermeyer, Dominik Winkelbauer, Dmitry Olefir,
  Tomas Hodan, Youssef Zidan, Mohamad Elbadrawy, Markus Knauer, Harinandan
  Katam, and Ahsan Lodhi.
\newblock Blenderproc: Reducing the reality gap with photorealistic rendering.
\newblock In \emph{16th Robotics: Science and Systems, RSS 2020, Workshops},
  2020.

\bibitem[Diamond and Boyd(2016)]{Diamond16cvxpy}
Steven Diamond and Stephen Boyd.
\newblock {CVXPY}: {A} {P}ython-embedded modeling language for convex
  optimization.
\newblock \emph{Journal of Machine Learning Research}, 17\penalty0
  (83):\penalty0 1--5, 2016.

\bibitem[H{\"a}gele et~al.(2016)H{\"a}gele, Nilsson, Pires, and
  Bischoff]{Hagele16springer-industrialRobotics}
Martin H{\"a}gele, Klas Nilsson, J~Norberto Pires, and Rainer Bischoff.
\newblock Industrial robotics.
\newblock \emph{Springer handbook of robotics}, pages 1385--1422, 2016.

\bibitem[Hoda{\v{n}} et~al.(2020)Hoda{\v{n}}, Sundermeyer, Drost, Labb{\'e},
  Brachmann, Michel, Rother, and Matas]{Hodan20eccvw-BOPChallenge}
Tom{\'a}{\v{s}} Hoda{\v{n}}, Martin Sundermeyer, Bertram Drost, Yann Labb{\'e},
  Eric Brachmann, Frank Michel, Carsten Rother, and Ji{\v{r}}{\'i} Matas.
\newblock {BOP} challenge 2020 on {6D} object localization.
\newblock \emph{European Conference on Computer Vision Workshops (ECCVW)},
  2020.

\bibitem[Irshad et~al.(2022)Irshad, Zakharov, Ambrus, Kollar, Kira, and
  Gaidon]{Irshad22eccv-shapo}
Muhammad~Zubair Irshad, Sergey Zakharov, Rares Ambrus, Thomas Kollar, Zsolt
  Kira, and Adrien Gaidon.
\newblock Shapo: Implicit representations for multi-object shape, appearance,
  and pose optimization.
\newblock In \emph{European Conf. on Computer Vision (ECCV)}, pages 275--292.
  Springer, 2022.

\bibitem[Jun and Nichol(2023{\natexlab{a}})]{Jun23arxiv-shap-e}
Heewoo Jun and Alex Nichol.
\newblock Shap-e: Generating conditional 3d implicit functions.
\newblock \emph{arXiv preprint arXiv:2305.02463}, 2023{\natexlab{a}}.

\bibitem[Jun and Nichol(2023{\natexlab{b}})]{Jun23arxiv-shape}
Heewoo Jun and Alex Nichol.
\newblock Shap-e: Generating conditional 3d implicit functions.
\newblock \emph{arXiv preprint arXiv:2305.02463}, 2023{\natexlab{b}}.

\bibitem[{Labbe} et~al.(2020){Labbe}, {Carpentier}, {Aubry}, and
  {Sivic}]{Labbe20eccv-CosyPose}
Y. {Labbe}, J. {Carpentier}, M. {Aubry}, and J. {Sivic}.
\newblock {CosyPose}: Consistent multi-view multi-object {6D} pose estimation.
\newblock In \emph{European Conf. on Computer Vision (ECCV)}, 2020.

\bibitem[Labb{\'e} et~al.(2022)Labb{\'e}, Manuelli, Mousavian, Tyree,
  Birchfield, Tremblay, Carpentier, Aubry, Fox, and
  Sivic]{Labbe22corl-megapose}
Yann Labb{\'e}, Lucas Manuelli, Arsalan Mousavian, Stephen Tyree, Stan
  Birchfield, Jonathan Tremblay, Justin Carpentier, Mathieu Aubry, Dieter Fox,
  and Josef Sivic.
\newblock Megapose: 6d pose estimation of novel objects via render \& compare.
\newblock 2022.

\bibitem[Lee et~al.(2021)Lee, Lee, Kim, and Kweon]{Lee21ral-metricScale}
Taeyeop Lee, Byeong-Uk Lee, Myungchul Kim, and In~So Kweon.
\newblock Category-level metric scale object shape and pose estimation.
\newblock \emph{{IEEE} Robotics and Automation Letters}, 6\penalty0
  (4):\penalty0 8575--8582, 2021.

\bibitem[Li et~al.(2018)Li, Wang, Ji, Xiang, and Fox]{Li18eccv-DeepIMDeep}
Yi Li, Gu Wang, Xiangyang Ji, Yu Xiang, and Dieter Fox.
\newblock {{DeepIM}}: {{Deep Iterative Matching}} for {{6D Pose Estimation}}.
\newblock In \emph{European Conf. on Computer Vision (ECCV)}, pages 683--698,
  2018.

\bibitem[Lin et~al.(2021)Lin, Wei, Li, Xu, Jia, and Li]{Lin21iccv-dualposenet}
Jiehong Lin, Zewei Wei, Zhihao Li, Songcen Xu, Kui Jia, and Yuanqing Li.
\newblock Dualposenet: Category-level 6d object pose and size estimation using
  dual pose network with refined learning of pose consistency.
\newblock In \emph{Intl. Conf. on Computer Vision (ICCV)}, pages 3560--3569,
  2021.

\bibitem[Liu et~al.(2024)Liu, Xu, Jin, Chen, Varma~T, Xu, and
  Su]{Liu24neuips-oneTwoThree}
Minghua Liu, Chao Xu, Haian Jin, Linghao Chen, Mukund Varma~T, Zexiang Xu, and
  Hao Su.
\newblock One-2-3-45: Any single image to 3d mesh in 45 seconds without
  per-shape optimization.
\newblock \emph{Advances in Neural Information Processing Systems (NIPS)}, 36,
  2024.

\bibitem[Liu et~al.(2022)Liu, Zhang, Zhang, Fu, Tang, Liang, Tang, Cheng,
  Zhang, Wang, and Ji]{Liu22eccvw-gdrnppBOP}
Xingyu Liu, Ruida Zhang, Chenyangguang Zhang, Bowen Fu, Jiwen Tang, Xiquan
  Liang, Jingyi Tang, Xiaotian Cheng, Yukang Zhang, Gu Wang, and Xiangyang Ji.
\newblock Gdrnpp.
\newblock \url{https://github.com/shanice-l/gdrnpp_bop2022}, 2022.

\bibitem[Lunayach et~al.(2024)Lunayach, Zakharov, Chen, Ambrus, Kira, and
  Irshad]{Lunayach24icra-FSD}
Mayank Lunayach, Sergey Zakharov, Dian Chen, Rares Ambrus, Zsolt Kira, and
  Muhammad~Zubair Irshad.
\newblock Fsd: Fast self-supervised single rgb-d to categorical 3d objects.
\newblock In \emph{IEEE Intl. Conf. on Robotics and Automation (ICRA)}, pages
  14630--14637. IEEE, 2024.

\bibitem[Mousavian et~al.(2017)Mousavian, Anguelov, Flynn, and
  Kosecka]{Mousavian17cvpr-3dBbox}
Arsalan Mousavian, Dragomir Anguelov, John Flynn, and Jana Kosecka.
\newblock 3d bounding box estimation using deep learning and geometry.
\newblock In \emph{IEEE Conf. on Computer Vision and Pattern Recognition
  (CVPR)}, pages 7074--7082, 2017.

\bibitem[Newell et~al.(2016)Newell, Yang, and Deng]{Newell16-stackedHourglass}
Alejandro Newell, Kaiyu Yang, and Jia Deng.
\newblock Stacked hourglass networks for human pose estimation.
\newblock In \emph{European Conf. on Computer Vision (ECCV)}, pages 483--499.
  Springer, 2016.

\bibitem[Oquab et~al.(2023)Oquab, Darcet, Moutakanni, Vo, Szafraniec, Khalidov,
  Fernandez, Haziza, Massa, El-Nouby, et~al.]{Oquab23arxiv-dinov2}
Maxime Oquab, Timoth{\'e}e Darcet, Th{\'e}o Moutakanni, Huy Vo, Marc
  Szafraniec, Vasil Khalidov, Pierre Fernandez, Daniel Haziza, Francisco Massa,
  Alaaeldin El-Nouby, et~al.
\newblock Dinov2: Learning robust visual features without supervision.
\newblock \emph{arXiv preprint arXiv:2304.07193}, 2023.

\bibitem[Park et~al.(2019)Park, Florence, Straub, Newcombe, and
  Lovegrove]{Park19cvpr-deepSDF}
J.J. Park, P. Florence, J. Straub, R. Newcombe, and S. Lovegrove.
\newblock {DeepSDF}: Learning continuous signed distance functions for shape
  representation.
\newblock In \emph{IEEE Conf. on Computer Vision and Pattern Recognition
  (CVPR)}. IEEE, 2019.

\bibitem[Park and D'Amico(2024)]{Park24aiaa-spe3r}
Tae~Ha Park and Simone D'Amico.
\newblock Rapid abstraction of spacecraft 3d structure from single 2d image.
\newblock In \emph{AIAA SCITECH 2024 Forum}, page 2768, 2024.

\bibitem[Paszke et~al.(2019)Paszke, Gross, Massa, Lerer, Bradbury, Chanan,
  Killeen, Lin, Gimelshein, Antiga, et~al.]{Paszke19neurips-pytorch}
Adam Paszke, Sam Gross, Francisco Massa, Adam Lerer, James Bradbury, Gregory
  Chanan, Trevor Killeen, Zeming Lin, Natalia Gimelshein, Luca Antiga, et~al.
\newblock Pytorch: An imperative style, high-performance deep learning library.
\newblock \emph{Advances in neural information processing systems}, 32, 2019.

\bibitem[Pavlakos et~al.(2017)Pavlakos, Zhou, Chan, Derpanis, and
  Daniilidis]{Pavlakos17icra-semanticKeypoints}
G. Pavlakos, X. Zhou, A. Chan, K. Derpanis, and K. Daniilidis.
\newblock 6-dof object pose from semantic keypoints.
\newblock In \emph{IEEE Intl. Conf. on Robotics and Automation (ICRA)}, 2017.

\bibitem[Peng et~al.(2022)Peng, Yan, Wen, and Sun]{Peng22aaai-selfSupPoseShape}
Wanli Peng, Jianhang Yan, Hongtao Wen, and Yi Sun.
\newblock Self-supervised category-level 6d object pose estimation with deep
  implicit shape representation.
\newblock In \emph{Nat. Conf. on Artificial Intelligence (AAAI)}, pages
  2082--2090, 2022.

\bibitem[Perez et~al.(2018)Perez, Strub, de~Vries, Dumoulin, and
  Courville]{Perez18aaai-FiLMVisual}
Ethan Perez, Florian Strub, Harm de Vries, Vincent Dumoulin, and Aaron
  Courville.
\newblock {{FiLM}}: {{Visual Reasoning}} with a {{General Conditioning Layer}}.
\newblock \emph{Nat. Conf. on Artificial Intelligence (AAAI)}, 32\penalty0 (1),
  2018.

\bibitem[Qi et~al.(2017)Qi, Su, Mo, and Guibas]{Qi17cvpr-pointnet}
Charles~R Qi, Hao Su, Kaichun Mo, and Leonidas~J Guibas.
\newblock Pointnet: Deep learning on point sets for {3D} classification and
  segmentation.
\newblock In \emph{IEEE Conf. on Computer Vision and Pattern Recognition
  (CVPR)}, pages 652--660, 2017.

\bibitem[Ranftl et~al.(2021)Ranftl, Bochkovskiy, and Koltun]{Ranftl21iccv-DPT}
Ren{\'e} Ranftl, Alexey Bochkovskiy, and Vladlen Koltun.
\newblock Vision transformers for dense prediction.
\newblock In \emph{Intl. Conf. on Computer Vision (ICCV)}, pages 12179--12188,
  2021.

\bibitem[Sanneman et~al.(2021)Sanneman, Fourie, Shah,
  et~al.]{Sanneman21ftr-stateIndustrialRobot}
Lindsay Sanneman, Christopher Fourie, Julie~A Shah, et~al.
\newblock The state of industrial robotics: Emerging technologies, challenges,
  and key research directions.
\newblock \emph{Foundations and Trends{\textregistered} in Robotics},
  8\penalty0 (3):\penalty0 225--306, 2021.

\bibitem[Shalev-Shwartz et~al.(2006)Shalev-Shwartz, Singer, Bennett, and
  Parrado-Hern{\'a}ndez]{Shalev06jmlr-simplexProjection}
Shai Shalev-Shwartz, Yoram Singer, Kristin~P Bennett, and Emilio
  Parrado-Hern{\'a}ndez.
\newblock Efficient learning of label ranking by soft projections onto
  polyhedra.
\newblock \emph{J. of Machine Learning Research}, 7\penalty0 (7), 2006.

\bibitem[Shi et~al.(2023)Shi, Talak, Maggio, and Carlone]{Shi23rss-ensemble}
J. Shi, R. Talak, D. Maggio, and L. Carlone.
\newblock A correct-and-certify approach to self-supervise object pose
  estimators via ensemble self-training.
\newblock 2023.

\bibitem[Sitzmann et~al.()Sitzmann, Martel, Bergman, Lindell, and
  Wetzstein]{Sitzmann20neurips-siren}
Vincent Sitzmann, Julien Martel, Alexander Bergman, David Lindell, and Gordon
  Wetzstein.
\newblock Implicit neural representations with periodic activation functions.
\newblock \emph{Advances in Neural Information Processing Systems (NIPS)},
  33:\penalty0 7462--7473.

\bibitem[Talak et~al.(2023)Talak, Peng, and Carlone]{Talak23tro-c3po}
R. Talak, L. Peng, and L. Carlone.
\newblock Certifiable {3D} object pose estimation: Foundations, learning
  models, and self-training.
\newblock \emph{{IEEE} Trans. Robotics}, 39\penalty0 (4):\penalty0 2805--2824,
  2023.

\bibitem[Tian et~al.(2020)Tian, Ang, and Lee]{Tian20eccv-SPD}
Meng Tian, Marcelo~H Ang, and Gim~Hee Lee.
\newblock Shape prior deformation for categorical 6d object pose and size
  estimation.
\newblock In \emph{European Conf. on Computer Vision (ECCV)}, pages 530--546.
  Springer, 2020.

\bibitem[Wang et~al.(2020)Wang, Manhardt, Shao, Ji, Navab, and
  Tombari]{Wang20eccv-Self6DSelfSupervised}
Gu Wang, Fabian Manhardt, Jianzhun Shao, Xiangyang Ji, Nassir Navab, and
  Federico Tombari.
\newblock {Self6D}: Self-supervised monocular {6D} object pose estimation.
\newblock In \emph{European Conf. on Computer Vision (ECCV)}, pages 108--125,
  2020.

\bibitem[Wang et~al.(2021)Wang, Manhardt, Tombari, and
  Ji]{Wang21cvpr-GDRNetGeometryGuided}
Gu Wang, Fabian Manhardt, Federico Tombari, and Xiangyang Ji.
\newblock {{GDR-Net}}: {{Geometry-Guided Direct Regression Network}} for
  {{Monocular 6D Object Pose Estimation}}.
\newblock In \emph{IEEE Conf. on Computer Vision and Pattern Recognition
  (CVPR)}, pages 16611--16621, 2021.

\bibitem[Wang et~al.(2019)Wang, Sridhar, Huang, Valentin, Song, and
  Guibas]{Wang19-normalizedCoordinate}
H. Wang, S. Sridhar, J. Huang, J. Valentin, S. Song, and L. Guibas.
\newblock Normalized object coordinate space for category-level 6d object pose
  and size estimation.
\newblock In \emph{IEEE Conf. on Computer Vision and Pattern Recognition
  (CVPR)}, pages 2642--2651, 2019.

\bibitem[Wang and Solomon(2019)]{Wang19nips-prnet}
Yue Wang and Justin~M Solomon.
\newblock {PRNet: Self-Supervised Learning for Partial-to-Partial
  Registration}.
\newblock In \emph{Advances in Neural Information Processing Systems (NIPS)},
  pages 8812--8824, 2019.

\bibitem[Wen et~al.(2023)Wen, Tremblay, Blukis, Tyree, M{\"u}ller, Evans, Fox,
  Kautz, and Birchfield]{Wen23cvpr-bundlesdf}
Bowen Wen, Jonathan Tremblay, Valts Blukis, Stephen Tyree, Thomas M{\"u}ller,
  Alex Evans, Dieter Fox, Jan Kautz, and Stan Birchfield.
\newblock Bundlesdf: Neural 6-dof tracking and 3d reconstruction of unknown
  objects.
\newblock In \emph{IEEE Conf. on Computer Vision and Pattern Recognition
  (CVPR)}, pages 606--617, 2023.

\bibitem[Xiang et~al.(2018)Xiang, Schmidt, Narayanan, and
  Fox]{Xiang17rss-posecnn}
Yu Xiang, Tanner Schmidt, Venkatraman Narayanan, and Dieter Fox.
\newblock {PoseCNN}: A convolutional neural network for {6D} object pose
  estimation in cluttered scenes.
\newblock In \emph{Robotics: Science and Systems (RSS)}, 2018.

\bibitem[Xiao and Snoek(2024)]{Xiao24arxiv-beyondAdaptation}
Zehao Xiao and Cees~GM Snoek.
\newblock Beyond model adaptation at test time: A survey.
\newblock \emph{arXiv preprint arXiv:2411.03687}, 2024.

\bibitem[Ze and Wang(2022)]{Ze22neurips-wild6d}
Yanjie Ze and Xiaolong Wang.
\newblock Category-level 6d object pose estimation in the wild: A
  semi-supervised learning approach and a new dataset.
\newblock \emph{Advances in Neural Information Processing Systems (NIPS)},
  35:\penalty0 27469--27483, 2022.

\bibitem[Zhao et~al.(2024)Zhao, Kwon, Streli, Pollefeys, and
  Holz]{Zhao24arxiv-egopressure}
Yiming Zhao, Taein Kwon, Paul Streli, Marc Pollefeys, and Christian Holz.
\newblock {EgoPressure}: A dataset for hand pressure and pose estimation in
  egocentric vision.
\newblock In \emph{ArXiv Preprint: 2409.02224}, 2024.

\end{thebibliography}
}

\newpage
\appendix

\section{Proof of Equivalence}
\label{sec:app:opt}
We show that the two optimization problems~\eqref{eq:corrector-01} and~\eqref{eq:corrector-02} are equivalent. 
\begin{lemma}
	If $p^\ast$ and $f^{\ast}$ are the optimal values of the problems~\eqref{eq:corrector-01} and~\eqref{eq:corrector-02}, respectively, then 
	\begin{equation}
		p^\ast = f^\ast.
	\end{equation}
	Let $(\MZ^\ast, \vh^\ast, \MR^\ast, \vt^\ast)$ be the optimal solution to~\eqref{eq:corrector-02}, and $\Delta = \MR^\ast \MX + \vt^{\ast}\ones_n\tran - \MZ^{\ast}$, then  $(\MZ^\ast + \Delta, \vh^\ast,  \MR^\ast, \vt^\ast)$ is the optimal solution to~\eqref{eq:corrector-01}.
\end{lemma}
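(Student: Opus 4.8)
The plan is to prove equality of the optimal values by a two-sided inequality, each direction realized by an explicit construction that maps a feasible point of one problem to a feasible point of the other with the same objective value. The key observation is that the inner least-squares problem in~\eqref{eq:corrector-02} produces a fitting residual $\Delta$, and adding this residual back to $\MZ^\ast$ exactly enforces the hard constraint of~\eqref{eq:corrector-01} without changing the objective; the candidate optimizer named in the lemma is precisely this corrected point.

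\textbf{Direction $p^\ast \le f^\ast$.} First I would take an optimizer $(\MZ^\ast, \vh^\ast)$ of~\eqref{eq:corrector-02} with induced pose $(\MR^\ast, \vt^\ast)$ (the argmin appearing in the constraint). Setting $\Delta = \MR^\ast \MX + \vt^\ast \ones_n\tran - \MZ^\ast$, I observe that column $i$ of $\MZ^\ast + \Delta$ equals $\MR^\ast \vxx_i + \vt^\ast$, so the tuple $(\MZ^\ast + \Delta, \vh^\ast, \MR^\ast, \vt^\ast)$ satisfies the hard constraint $\vz_i = \MR\vxx_i + \vt$ of~\eqref{eq:corrector-01} and is therefore feasible. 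Substituting into the objective of~\eqref{eq:corrector-01} gives $\sum_i |f_d(\MR^\ast \vxx_i + \vt^\ast \mid \vh^\ast)|^2$, which is exactly the objective value $f^\ast$ attained by $(\MZ^\ast, \vh^\ast)$ in~\eqref{eq:corrector-02}. Hence $p^\ast \le f^\ast$.

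\textbf{Direction $f^\ast \le p^\ast$.} Conversely, I would take an optimizer $(\MZ', \vh', \MR', \vt')$ of~\eqref{eq:corrector-01}. Feasibility gives $\vz_i' = \MR'\vxx_i + \vt'$ for all $i$, so the least-squares residual $\sum_i \norm{\MR'\vxx_i + \vt' - \vz_i'}^2$ is zero. Thus $(\MR', \vt')$ is a global minimizer of the inner problem at $\MZ = \MZ'$, i.e. an admissible $(\hat\MR, \hat\vt)$ in~\eqref{eq:corrector-02}, and evaluating the outer objective of~\eqref{eq:corrector-02} at $(\MZ', \vh')$ returns $\sum_i |f_d(\MR'\vxx_i + \vt' \mid \vh')|^2$, matching the objective of~\eqref{eq:corrector-01}. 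Hence $f^\ast \le p^\ast$. Combining the two inequalities yields $p^\ast = f^\ast$, and since the point built in the first direction is feasible for~\eqref{eq:corrector-01} and attains $p^\ast$, it is optimal, which is the final claim.

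\textbf{Main obstacle.} The delicate point is the inner argmin in~\eqref{eq:corrector-02}: if the least-squares pose is not unique, the map $\MZ \mapsto (\hat\MR, \hat\vt)$ is multivalued and the constraint is ambiguous. I expect this to be harmless in both directions, because they are exercised only at configurations where the fit is exact (zero residual): any minimizer then satisfies $\hat\MR \vxx_i + \hat\vt = \vz_i$ exactly for every $i$, so the argument fed into $f_d$ is the same regardless of the choice and the objective is well defined. I would state this explicitly, noting that Arun's method returns a well-defined minimizer under the usual non-degeneracy of $\MX$, so that the argmin constraint may be treated as single-valued throughout.
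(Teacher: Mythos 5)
Your proof is correct and follows essentially the same route as the paper's: the same two-sided inequality, with the identical $\Delta$-shift construction turning the optimizer of~\eqref{eq:corrector-02} into a feasible point of~\eqref{eq:corrector-01}, and the reverse direction being the relaxation argument the paper states in one line (you spell out explicitly that zero residual forces any inner minimizer to satisfy $\hat{\MR}\vxx_i + \hat{\vt} = \vz_i$). Your remark on the possible multivaluedness of the inner argmin is a careful touch the paper omits, but it does not change the argument's substance.
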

\begin{proof}
	First, note that 
	the problem~\eqref{eq:corrector-02} is obtained from~\eqref{eq:corrector-01} by relaxing the hard equality constraint (\ie $\vz_i = \MR \vxx_i + \vt$). Therefore, 
	\begin{equation}
		f^{\ast} \leq p^\ast.
	\end{equation}
	Second, we are given that $(\MZ^\ast, \vh^\ast, \MR^\ast, \vt^\ast)$ is the optimal solution to~\eqref{eq:corrector-02}. Let  $\delta_i = \MR^\ast \vxx_i + \vt^{\ast} - \vz^{\ast}_i$. Then, 
	\begin{equation}
		\tilde{\vz}_i = \vz^{\ast}_i + \delta_i = \MR^\ast \vxx_i + \vt^{\ast},
	\end{equation}
	satisfies the equality constraint in~\eqref{eq:corrector-01}.
	Furthermore, 
	\begin{equation}
		f^\ast = \sum_{i=1}^{n} |  f_d(\MR^\ast \vxx_i + \vt^\ast~|~\vh^\ast) |^2 =  \sum_{i=1}^{n} |  f_d(\tilde{\vz}_i~|~\vh^\ast) |^2,
 	\end{equation}
 	which implies that a feasible solution $(\tilde{\MZ}, \vh^\ast, \MR^\ast, \vt^\ast)$ in~\eqref{eq:corrector-01} attains the same objective value as $f^\ast$. Therefore, $p^{\ast} = f^{\ast}$.
 	Note that 
 	\begin{equation}
 		\tilde{\MZ} = \MZ^\ast + \Delta,
 	\end{equation}
 	where $\Delta = [\delta_1, \delta_2, \ldots \delta_n]$, which equals $\MR^\ast \MX + \vt^{\ast}\ones_n\tran - \MZ^{\ast}$.

\end{proof}

\section{Architecture Details}
\label{sec:app:architecture}

\begin{figure*}[bt!]
   \centering
   \includegraphics[width=0.7\linewidth]{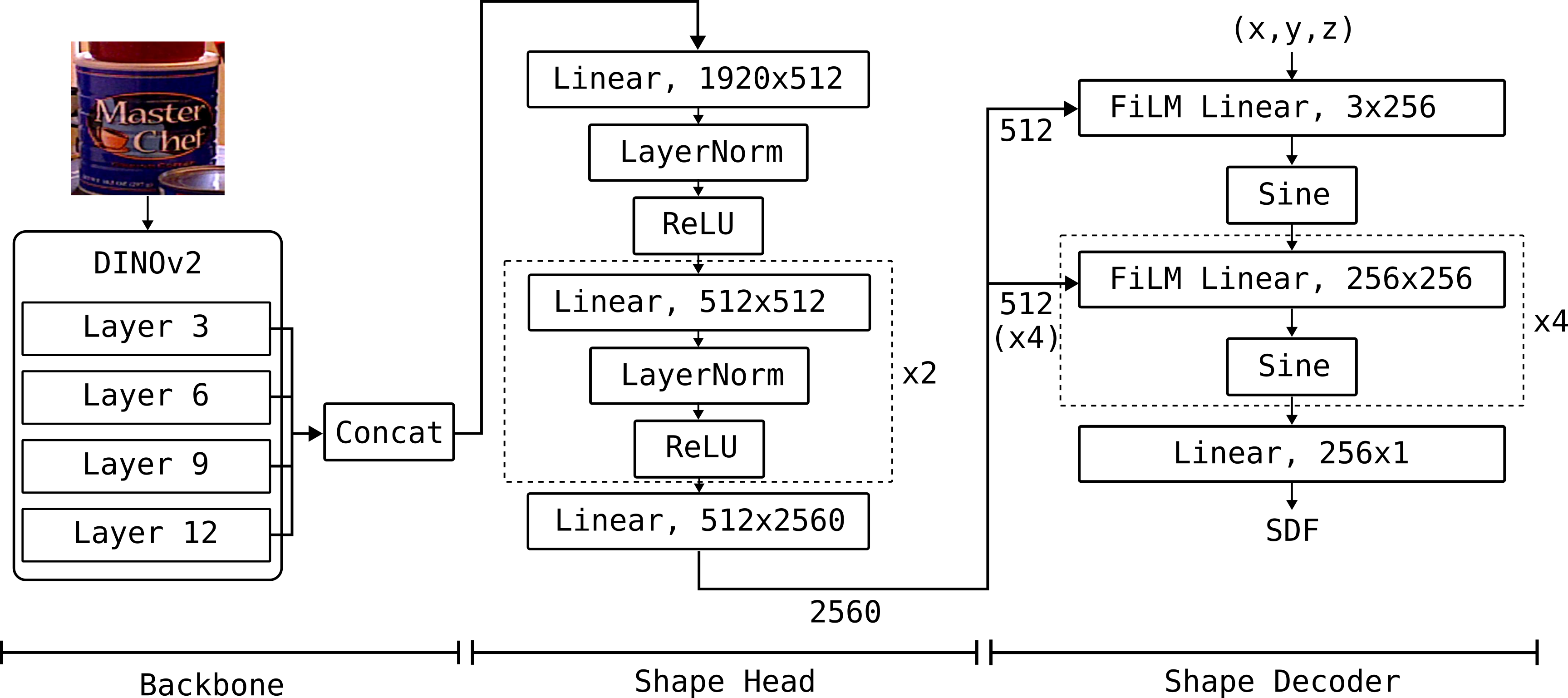}
   \caption{Diagram of our architecture for \pipelineName's shape head and shape decoder.}
   \label{fig:app:shape-branch}
   \vspace{-2mm}
\end{figure*}

\begin{figure*}[bt!]
   \centering
   \includegraphics[width=0.6\linewidth]{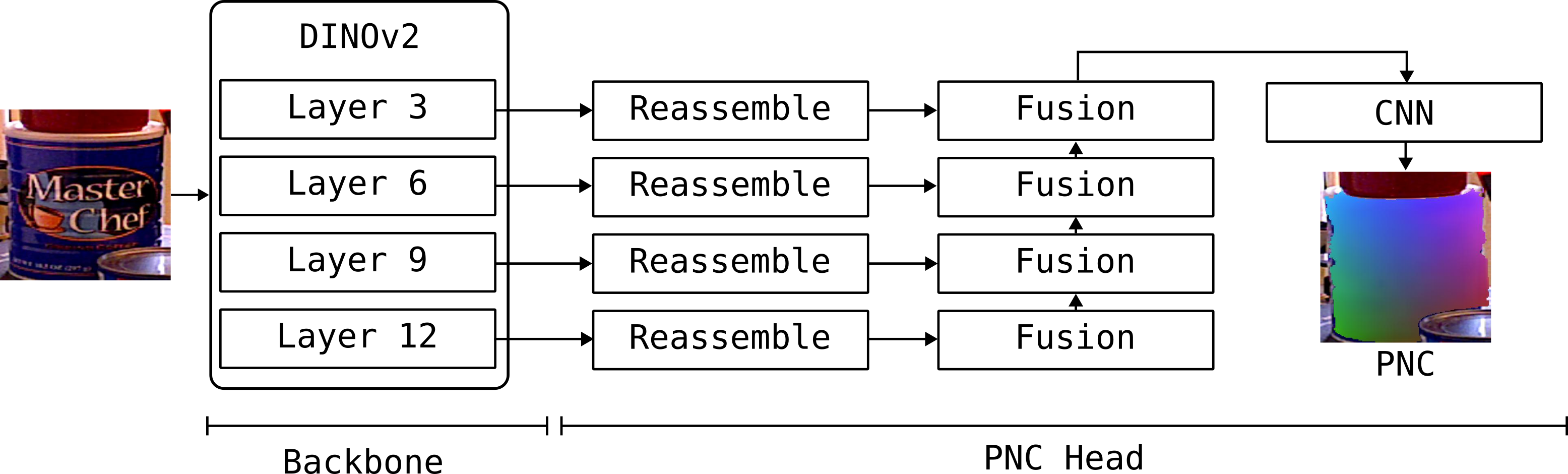}
   \caption{Diagram of our architecture for \pipelineName's PNC head.}
   \label{fig:app:pnc-branch}
   \vspace{-2mm}
\end{figure*}

Fig.~\ref{fig:app:shape-branch} and Fig.~\ref{fig:app:pnc-branch} show the detailed architecture of our \pipelineName.
We use the ViT-S variant of DINOv2~\cite{Oquab23arxiv-dinov2} as our ViT backbone, and keep it frozen during training.

The shape head (Fig.~\ref{fig:app:shape-branch}) is an MLP with layer normalization and ReLU activations. The network has two hidden layers, with hidden dimension equals to $512$.
We take the \texttt{[cls]} tokens from layers $(3, 6, 9, 12)$ as well as the patch tokens from layer $12$, concatenate them into a $1920$-dimensional vector and feed them into the shape head.
The output of the shape head is a $2560$-dimensional vector, which we use to condition the shape encoder.
The shape decoder is a FiLM-conditioned MLP with sinusoidal activations~\cite{Perez18aaai-FiLMVisual,Sitzmann20neurips-siren}.
The network has $4$ hidden layers, with hidden dimension equals to $256$.
The output of the shape head is split into $5$ $512$-dimensional vectors, 
with each vector conditioning one layer of the shape decoder.
The final layer is a linear layer with output dimension equals to $1$.

The PNC head (Fig.~\ref{fig:app:pnc-branch}) is based on DPT~\cite{Ranftl21iccv-DPT}. 
We take tokens from layers $(3, 6, 9, 12)$ and pass them independently through reassemble blocks (see~\cite{Ranftl21iccv-DPT} for details).
We then pass the output of each reassemble blocks through fusion blocks respectively.
We use projection as the readout operation and generate features with $256$ dimensions.
Note that different from the original DPT Implementation~\cite{Ranftl21iccv-DPT}, 
we use group normalization in the residual convolutional unit in each of the fusion blocks.
We note that this helps with the stability of the network in training.
Lastly, we use a CNN to produce the PNC output.
The CNN has $3$ layers, with the first layer having $256$ input channels and $128$ output channels, the second layer having $128$ input channels and $32$ output channels, and the third layer having $32$ input channels and $3$ output channels.
The kernel sizes of the first two layers are $3\times 3$, and the kernel size of the third layer is $1\times 1$.
We use ReLU as the activation function.

\section{Training Details}
\label{sec:app:training-details}

\subsection{Supervised Training Details}
For PNC, we adopt a loss similar to the soft-$L_1$ used in~\cite{Wang19-normalizedCoordinate}:
\begin{align}
&L_{PNC}\left(\MZ, \MZ^*\right) \nonumber \\
&=\frac{1}{n}\sum_{i=1}^N\left\{\begin{array}{ll}\left(\vz_i-\vz_i^*\right)^2 / \left( 2 \zeta \right), & \left|\vz_i-\vz_i^*\right| \leq \zeta \\ \left|\vz_i-\vz_i^*\right|-\zeta / 2, & \left|\vz_i-\vz_i^*\right|>\zeta\end{array}\right.
\label{eq:app:pnc-sup-loss}
\end{align}
where $\zeta$ is the threshold to control switching between $L_1$ and quadratic loss. We use $\zeta = 0.1$ for all datasets.

For shape head and decoder, we adopt a loss similar to~\cite{Sitzmann20neurips-siren}:
\begin{align} 
&\mathcal{L}_{SDF} \nonumber \\
&=\frac{\gamma_1}{M_1}\sum_{i \in \Omega_0, |\Omega_0| = M_1} | f_d(\vxx_i~|~\vh) - f_d(\vxx_i~|~\vh)^* | \\
&+ \frac{\gamma_2}{M_2} \sum_{i \in \Omega_e, |\Omega_e| = M_2} \phi\left( f_d(\vxx~|~\vh)\right) d \vxx \nonumber \\
&+ \frac{\gamma_3}{M_1 + M_2}\sum_{i \in \Omega_0 \cup \Omega_e}\left|\left\|\nabla_{\vxx} f_d(\vxx_i~|~\vh)\right\|-1\right|  
\label{eq:app:sdf-sup-loss}
\end{align}
where $\psi(\mathbf{x})=\exp (-\alpha \cdot|\Phi(\mathbf{x})|)$ and $\alpha =100$,
$\Omega_0$ is a set of points sampled on the shape manifold, equivalent to the zero-level set of SDF (with size $M_1$) and 
$\Omega_e$ is the set of points sampled outside the shape manifold, but inside the domain in which we want to reconstruct the SDF values (with size $M_2$).
The first term utilizes supervision directly on the SDF values.
The second term penalizes off-manifold points to have close-to-zero SDF values.
The last term in~\eqref{eq:app:sdf-sup-loss} is the Eikonal regularization term where we ensure the norm of spatial gradients to be $1$ almost everywhere in $\Omega$.
$\gamma_1$, $\gamma_2$, and $\gamma_3$ are hyperparameters.
We compute the ground truth SDF values using~\texttt{libigl}\footnote{https://github.com/libigl/libigl}.

The overall loss used is the following:
\begin{align}
    \mathcal{L}_{total} = \alpha \mathcal{L}_{PNC} + \beta \mathcal{L}_{SDF} 
\end{align}
where $\alpha$ and $\beta$ are hyperparameters.

\paragraph{YCBV Dataset} 
For the YCBV dataset,
we train two supervised models: \pipelineNameSyn and \pipelineNameReal.
For \pipelineNameSyn, 
we train using synthetic rendered images of the YCBV objects using BlenderProc~\cite{Denninger20rss-blenderproc}.
We generate a total of $4200$ images, with $200$ images per object.
We use $\alpha = 5\times 10^{3}$, $\beta = 0.1$, 
$\gamma_1 = 3\times 10^{3}$, $\gamma_2 = 2\times 10^{2}$, and $\gamma_3 = 50$.
We train \pipelineNameSyn with an Adam optimizer and learning rate of $3 \times 10^{-4}$ for 500 epochs.
We use a batch size of $16$ and weight decay of $10^{-5}$.

For \pipelineNameReal,
we train with the provided real-world train set images.
We use $\alpha = 5\times 10^{3}$, $\beta = 0.1$, 
$\gamma_1 = 3\times 10^{3}$, $\gamma_2 = 2\times 10^{2}$, and $\gamma_3 = 50$.
We train \pipelineNameReal with an Adam optimizer with a learning rate of $3 \times 10^{-4}$ for 50 epochs,
We use a batch size of $10$ and weight decay of $10^{-5}$.

\paragraph{SPE3R Dataset}
We train \pipelineName on the train set of SPE3R. 
We use an Adam optimizer with a learning rate of $10^{-4}$, weight decay of $10^{-6}$, a batch size of 16 and for 100 epochs.
We use a cosine annealing scheduler with restarts per $4000$ iterations and a multiplication factor of $2$.
We use $\alpha = 50$, $\beta = 1$, 
$\gamma_1 = 3\times 10^{3}$, $\gamma_2 = 2\times 10^{2}$, and $\gamma_3 = 50$.

\paragraph{NOCS Dataset}
We train \pipelineName on the train sets of \NOCSCAMERA and \NOCSREAL. We use an Adam optimizer with a learning rate of $10^{-4}$, weight decay of $10^{-6}$, a batch size of 16 and for 50 epochs.
We use a cosine annealing scheduler with restarts per $4000$ iterations and no multiplication factor.
We use $\alpha = 50$, $\beta = 1$, 
$\gamma_1 = 3\times 10^{3}$, $\gamma_2 = 2\times 10^{2}$, and $\gamma_3 = 50$.

\subsection{Self-Training Details}
\paragraph{Correction} For self-training, we use Alg.~\ref{alg:corrector} and Alg.~\ref{alg:corrector-lsq}.
There are two main components to both Alg.~\ref{alg:corrector} and Alg.~\ref{alg:corrector-lsq}: 
the solver for $\MZ$ (Line~\ref{algo:nocs-update} in Alg.~\ref{alg:corrector} and Alg.~\ref{alg:corrector-lsq})
and the solver for $\vh$ (Line~\ref{algo:shape-update} in Alg.~\ref{alg:corrector} and Alg.~\ref{alg:corrector-lsq}).

For the gradient descent solver for $\MZ$ in both Alg.~\ref{alg:corrector} and Alg.~\ref{alg:corrector-lsq},
we use a step size of $1 \times 10^{-3}$
and a maximum number of iterations of $50$
for all datasets.

For the solver for $\vh$ in Alg.~\ref{alg:corrector},
we use projected gradient descent as described in Section~\ref{sec:arch-corrector} with a step size of $1 \times 10^{-2}$ with a maximum number of iterations of $25$ for all datasets.
In addition, as described in Remark~\ref{rem:shape-multi-view},
we store $\MZ$ in a buffer from multiple views.
The buffer has a maximum size of $50$ frames.

\paragraph{Certification}
We use~\eqref{eq:oc-cert} as a boolean indicator function to generate pseudo-labels. 
For the YCBV dataset, we use $\epsilon = 1 \times 10^{-2}$ and $p = 0.98$.
For the SPE3R dataset, we use $\epsilon = 2 \times 10^{-2}$ and $p = 0.97$.

\paragraph{Self-Training}
We use~\eqref{eq:self-training-loss} for the loss function for self-training.
We use two separate stochastic gradient descent optimizers for the shape head and PNC head. 
For the shape head, we use a learning rate of $4 \times 10^{-4}$ for YCBV and $3 \times 10^{-4}$ for SPE3R. 
For the PNC head, we use a learning rate of $3 \times 10^{-4}$ for both YCBV and SPE3R.
We use a weight decay of $10^{-5}$ for both shape head and PNC head for all datasets.
We use a batch size of $3$ for the YCBV dataset and $10$ for the SPE3R dataset.

\section{Additional Results}
\label{sec:app:add-exp}

\begin{figure}[t!]
\centering
\begin{subfigure}{.4\linewidth}
\includegraphics[width=\linewidth]{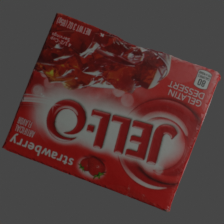}
\end{subfigure}%
~
\begin{subfigure}{.4\linewidth}
\includegraphics[width=\linewidth]{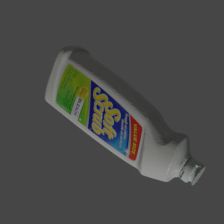}
\end{subfigure}
\caption{Two sample images from our YCBV synthetic dataset.}
\label{fig:app:ycbv-synthetic-render}
\vspace{-4mm}
\end{figure}

Similar to Section~\ref{sec:expt}, for tables, we use bold fonts for the best results and color the top three results with \tabfirstbox, \tabsecondbox~and \tabthirdbox, respectively. 
For tables fewer than three entries, we label only the top entry with \tabfirstbox~and bold font.

\subsection{YCBV Dataset}
\begin{figure}[t!]
\centering
  \centering
  \includegraphics[width=0.9\linewidth]{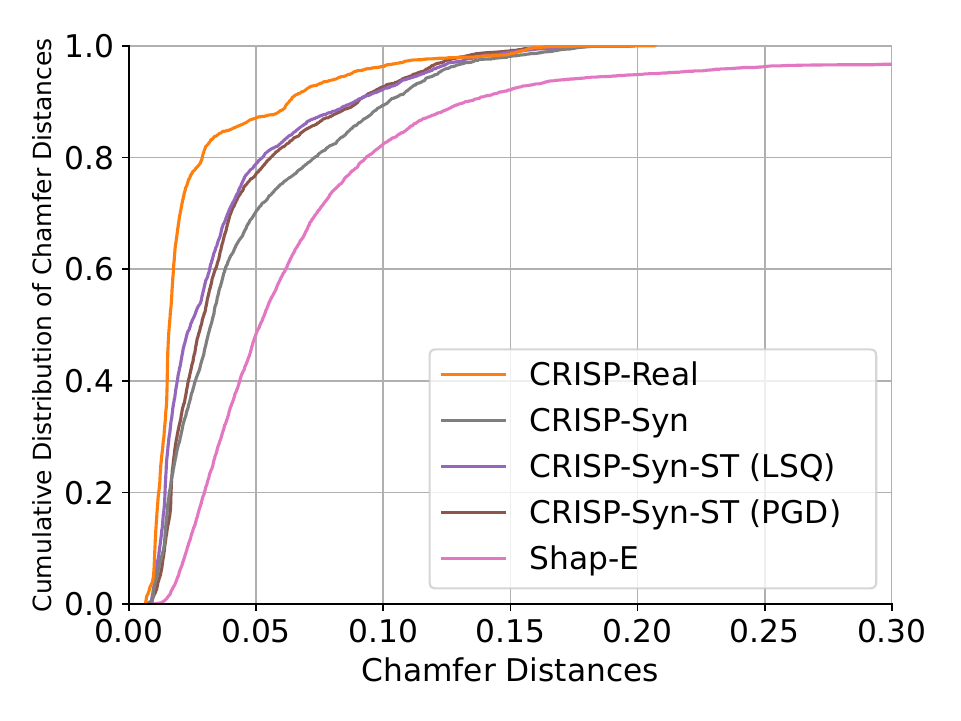}
  \caption{Cumulative distribution function of the Chamfer distances of the reconstructed shapes on the YCBV dataset.}
  \label{fig:ycbv-sota-shape-chamfer}
  \vspace{-3mm}
\end{figure}
\begin{figure}[t!]
  \centering
  \includegraphics[width=0.9\linewidth]{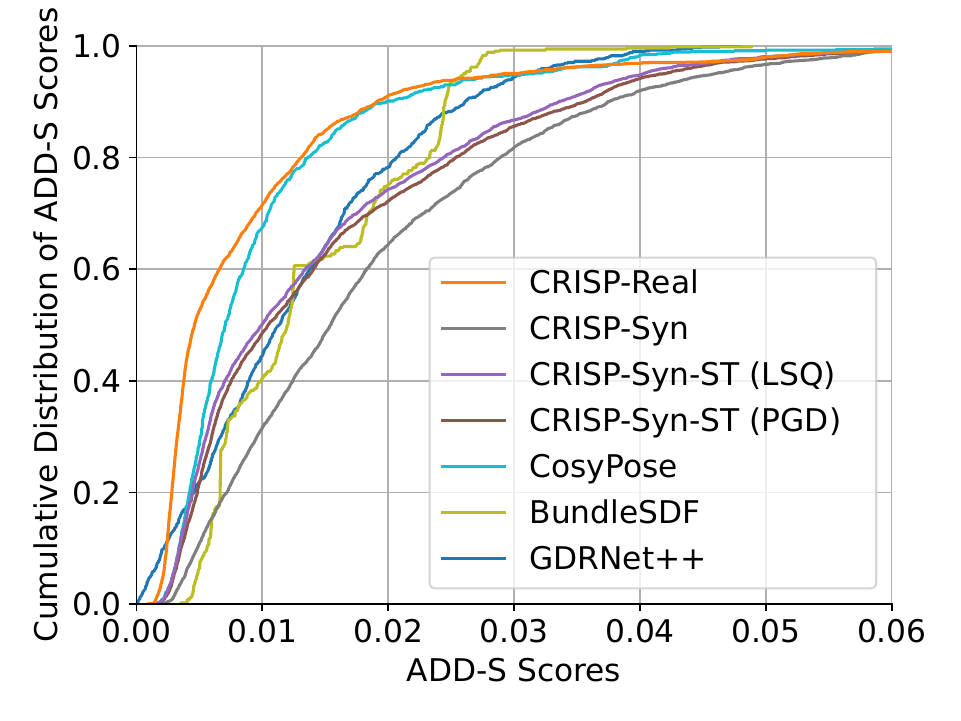}
  \caption{Cumulative distribution function of the ADD-S scores on the YCBV dataset.}
  \label{fig:ycbv-sota-adds}
  \vspace{-3mm}
\end{figure}

Fig.~\ref{fig:app:ycbv-synthetic-render} shows two sample images from our synthetic dataset.
Fig.~\ref{fig:ycbv-sota-shape-chamfer} shows the cumulative distribution function (CDF) curves of Chamfer distance between the reconstructed shape and the ground truth CAD model.
We note that \pipelineNameReal achieves the best performance,
with self-training improving the shape reconstruction performance of \pipelineNameSyn.
Fig.~\ref{fig:ycbv-sota-adds} shows the CDF curves of ADD-S scores.
We note that \pipelineNameReal again achieves the best performance,
with GDRNet++ achieves a slight performance edge at low ADD-S scores.
Self-training improves the performance of \pipelineNameSyn, with \pipelineNameSynLSQAdapt slightly outperforming \pipelineNameSynPGDAdapt.

\subsection{SPE3R Dataset}

\begin{table}[t!]
	\small
\centering
 \begin{tabular}{lllll} 
	\toprule
 Method & \multicolumn{2}{c}{$e^{L_1}_{shape} \downarrow$} & \multicolumn{2}{c}{$e^{L_1}_{pose} \downarrow$} \\ \cmidrule(lr){2-3} \cmidrule(lr){4-5}
 			 & Mean & Median & Mean & Median \\ \midrule
 \pipelineName  					& 0.163	    &  0.159	   & 0.292 	& 0.211	\\
  + Corrector 	& \tabfirst 0.139 	& \tabfirst 0.120 	& \tabfirst 0.191 	& \tabfirst 0.176 \\
 \bottomrule
 \end{tabular}
 \caption{Evaluation of \pipelineName with and without the corrector (Alg.~\ref{alg:corrector}) on the SPE3R dataset.}
 \label{tab:app:spe3r-results}
 \vspace{-3mm}
\end{table}

Our corrector (Alg.~\ref{alg:corrector}) can be used to improve performance during inference as well, not only for self-training.
Tab.~\ref{tab:app:spe3r-results} shows the performance of \pipelineName with and without the corrector (Alg.~\ref{alg:corrector}).
With the corrector, we see improved shape reconstruction performance and pose estimation performance.

\subsection{NOCS Dataset}

\begin{table*}[t!]
\centering
\small
\begin{tabular}{llllllll}
\toprule
        & \multicolumn{7}{c}{$e_{shape}$ (mm) $\downarrow$}    \\ \cmidrule(l){2-8} 
Methods & Bottle & Bowl & Camera & Can  & Laptop & Mug  & Avg  \\ \midrule
SPD~\cite{Tian20eccv-SPD}     & 3.44   & 1.21 & 8.89   & 1.56 & 2.91   & 1.02 & 3.17 \\
SGPA~\cite{Chen21iccv-SGPA}    & 2.93   & 0.89 & \tabthird 5.51   & 1.75 & \tabthird 1.62   & 1.12 & 2.44 \\
CASS~\cite{Chen20iccv-learningCanonicalShape}    & \tabsecond 0.75   & \tabsecond 0.38 & \tabsecond 0.77   & \tabsecond 0.42 & 3.73   & \tabsecond 0.32 & \tabsecond 1.06 \\
RePoNet~\cite{Ze22neurips-wild6d} & \tabthird 1.51   & \tabthird 0.76 & 8.79   & \tabthird 1.24 & \tabsecond 1.01   & \tabthird 0.94 & \tabthird 2.37 \\
\pipelineName    & \tabfirst 0.55   & \tabfirst 0.31 & \tabfirst 0.37   & \tabfirst 0.18 & \tabfirst 0.51   & \tabfirst 0.16 & \tabfirst 0.35 \\ \bottomrule
\end{tabular}
\caption{Shape reconstruction results on the \NOCS dataset.}
\label{tab:app:nocs-shape}
\end{table*}

\begin{table*}[t]
\centering
\small
\begin{tabular}{lllllllll}
\toprule
&             & \multicolumn{7}{c}{mAP}                                                          \\ \cmidrule(l){3-9} 
&Methods      & $IoU_{25}$ & $IoU_{50}$ & $IoU_{75}$ & \makecell{$5^{\circ}$ \\ $5 \mathrm{~cm}$} & \makecell{$5^{\circ}$ \\ $10 \mathrm{~cm}$} & \makecell{$10^{\circ}$ \\ $5 \mathrm{~cm}$} & \makecell{$10^{\circ}$ \\ $10 \mathrm{~cm}$} \\ \midrule
\parbox[t]{2mm}{\multirow{7}{*}{\rotatebox[origin=c]{90}{\small Category-level}}} &NOCS~\cite{Wang19-normalizedCoordinate}  & \tabfirst 84.8   & 78.0     & 30.1   & 10.0         & 9.8         & 25.2        & 25.8         \\
&Metric Scale~\cite{Lee21ral-metricScale} & 81.6   & 68.1   & \textendash       & 5.3        & 5.5         & 24.7        & 26.5         \\
&SPD~\cite{Tian20eccv-SPD}          & 81.2   & 77.3   & 53.2   & 21.4       & 21.4        & 54.1        & 54.1         \\
&CASS~\cite{Chen20iccv-learningCanonicalShape}         & \tabsecond 84.2   & 77.7   & \textendash      & 23.5       & \tabsecond 23.8        & 58.0      & \tabthird 58.3         \\
&SGPA~\cite{Chen21iccv-SGPA}         & \textendash        & \tabthird 80.1   & \tabsecond 61.9   & \tabsecond 39.6       & \textendash     & 70.7        & \textendash     \\
&RePoNet~\cite{Ze22neurips-wild6d}      & \textendash        & \tabsecond 81.1   & \textendash      & \tabfirst 40.4       & \textendash     & \tabfirst 68.8        & \textendash     \\
&SSC-6D~\cite{Peng22aaai-selfSupPoseShape}       & 83.2   & 73.0     & \textendash      & 19.6       & \textendash     & 54.5        & 56.2         \\\midrule
\parbox[t]{5mm}{\multirow{3}{*}{\rotatebox[origin=c]{90}{\makecell{\footnotesize Category\\-agnostic}}}} &DualPoseNet~\cite{Lin21iccv-dualposenet}  & \textendash        & 76.1   & 55.2   & \tabthird 31.3       & \textendash     & 60.4        & \textendash     \\
&FSD~\cite{Lunayach24icra-FSD}          & 80.9   & 77.4   & \tabsecond 61.9   & 28.1       & \tabfirst 34.4        & \tabthird 61.5        & \tabfirst 72.6         \\
&\pipelineName    & \tabsecond 84.2   & \tabfirst 83.5   & \tabfirst 70.5   & 22.5       & \tabthird 23.7        & \tabsecond 62.8        & \tabsecond 64.8        \\ \bottomrule
\end{tabular}
\caption{Pose estimation results on the \NOCS dataset.}
\label{tab:app:nocs-pose}
\end{table*}

Tab.~\ref{tab:app:nocs-shape} shows the shape reconstruction results on the NOCS dataset for all categories.
\pipelineName outperforms the baselines in all categories as well as the average in terms of $e_{shape}$.
Tab.~\ref{tab:app:nocs-pose} shows the pose estimation results on the NOCS dataset.
As noted in Section~\ref{sec:expt-nocs}, \pipelineName achieves comparable performance with the state-of-the-art category-agnostic methods.
Note that \pipelineName's rotation error performance can be potentially improved by incorporating more advanced loss functions such as the symmetry-aware loss used in~\cite{Wang19-normalizedCoordinate}.

\subsection{Additional Ablation Experiments}
\label{sec:app-ablation}

\paragraph{Scale Degeneracy.} 
\begin{table}[t!]
\small
\centering
\begin{tabular}{cccc}
\toprule
Ablations & \makecell{ADD-S \\(AUC)} $\uparrow$ & \makecell{ADD-S \\ (AUC)} $\uparrow$ & \makecell{ADD-S \\ (AUC)} $\uparrow$\\
& \multicolumn{1}{c}{1 cm} & \multicolumn{1}{c}{2 cm} & \multicolumn{1}{c}{3 cm}\\ \midrule
Normalized $\MZ$ (NOCS)                & $8 \times 10^{-4}$  & 0.036 & 0.095 \\
Unnormalized $\MZ$ (PNC)      & \tabfirst 0.22  & \tabfirst 0.42  & \tabfirst 0.54 \\\bottomrule
\end{tabular}
\caption{Comparing pose estimation performance between normalized $\MZ$ (NOCS) with PNC for self-training on the YCBV dataset.}
\label{app:tab:nocs-vs-pnc}
\end{table}

Comparing with NOCS~\cite{Wang19-normalizedCoordinate}, we see that PNC achieves significantly better pose estimation performance through the prevention of scale degeneracy (Tab.~\ref{app:tab:nocs-vs-pnc}).
To understand why the scale degeneracy is a problem, 
consider the pose optimization problem used with NOCS:
\begin{equation}
    (\hat{s}, \hat{\MR}, \hat{\vt}) = \underset{(s, \MR, \vt)}{\text{argmin}} \sum_{i=1}^{n}\norm{s\MR\vxx_i + \vt - \vz_i}^2.
    \label{eq:app:nocs-pose-opt}
\end{equation}
Assume noiseless depth $\{\vxx_i\}$, 
for a given $\vz_i$, we have $(\hat{s}, \hat{\MR}, \hat{\vt})$ as the solution to~\eqref{eq:app:nocs-pose-opt}.
If we scale $\vz_i$ by a factor of $b$, 
we can achieve the same loss for~\eqref{eq:app:nocs-pose-opt}
if we let $s = b \hat{s}$, $\MR = \hat{\MR}$, and $\vt = b\hat{\vt}$.
In practice, we observe such phenomenon (see Fig.~\ref{fig:app:nocs-scale-degeneracy}):
after self-training, the predicted $z_i$ becomes significantly smaller than the ground truth $z_i$ (Fig.~\ref{fig:app:nocs-scale-degen-small-nocs}).
If we transform the CAD model using the estimated pose, we observe that the transformed CAD model is much larger than the ground truth CAD model (Fig.~\ref{fig:app:nocs-scale-degen-large-cad}).
We note that this might be one of the reasons why prior works~\cite{Peng22aaai-selfSupPoseShape,Lunayach24icra-FSD} need to keep access to synthetic data during self-supervised training.

\begin{figure}[t!]
  \begin{subfigure}[b]{0.5\linewidth}
    \centering
    \includegraphics[width=0.6\linewidth]{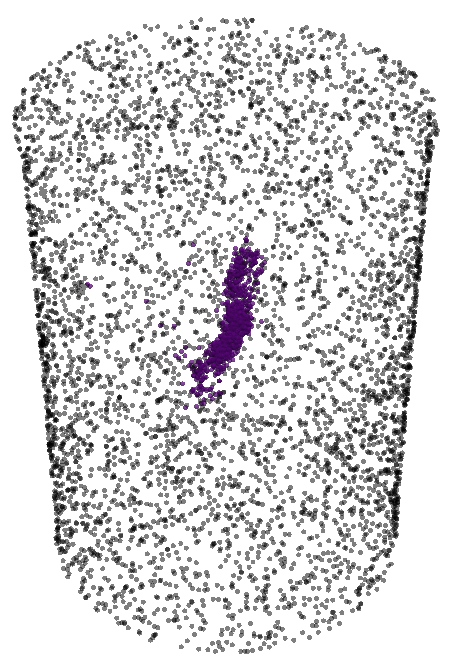}
    \caption{}
    \label{fig:app:nocs-scale-degen-small-nocs}
  \end{subfigure}%
  \begin{subfigure}[b]{0.5\linewidth}
    \centering
    \includegraphics[width=0.9\linewidth]{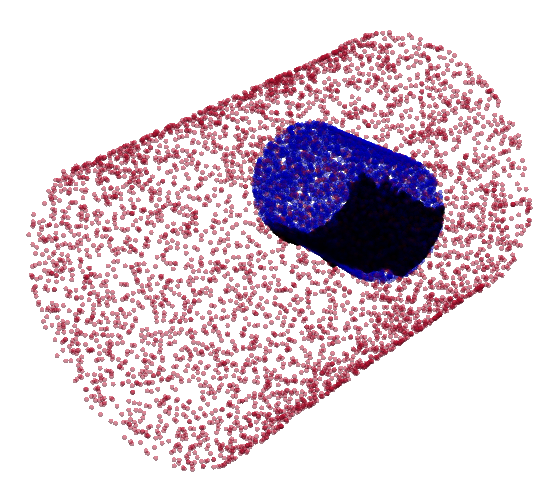}
    \caption{}
    \label{fig:app:nocs-scale-degen-large-cad}
  \end{subfigure}
  \caption{(a) \emph{Purple}: $\MZ$ predicted by the PNC head after self-training if we adopt normalization for $\MZ$ (equivalent to NOCS~\cite{Wang19-normalizedCoordinate}), 
  \emph{Black}: ground truth CAD model. 
  (b) \emph{Blue}: ground truth transformed CAD model. \emph{Black}: depth point cloud. 
  \emph{Red}: CAD model transformed using estimated pose with normalized $\MZ$.
  }
  \label{fig:app:nocs-scale-degeneracy}
  \vspace{-4mm}
\end{figure}

\paragraph{Use of $\vh = f_e(\calI)$ in~\eqref{eq:active-shape}.}
\begin{table}[t!]
\small
\centering
\begin{tabular}{cccc}
\toprule
Ablations & \makecell{$e_{shape}$ \\(AUC)} $\uparrow$ & \makecell{$e_{shape}$ \\ (AUC)} $\uparrow$ & \makecell{$e_{shape}$ \\ (AUC)} $\uparrow$\\
& \multicolumn{1}{c}{3 cm} & \multicolumn{1}{c}{5 cm} & \multicolumn{1}{c}{10 cm}\\ \midrule
Without $\vh = f_e(\calI)$       & 0.21  & 0.35 & 0.55 \\
With $\vh = f_e(\calI)$     & \tabfirst 0.25  & \tabfirst 0.43  & \tabfirst 0.65 \\\bottomrule
\end{tabular}
\caption{Comparing shape reconstruction performance between using and not using $\vh = f_e(\calI)$ in~\eqref{eq:active-shape} on the YCBV dataset.}
\label{app:tab:encoder-basis}
\end{table}

The use of $\vh = f_e(\calI)$ in~\eqref{eq:active-shape} can be seen as a way
to incorporate the shape encoder's output into Alg.~\ref{alg:corrector-lsq}.
We empirically observe that doing so improves the shape reconstruction performance after self-training (see Tab.~\ref{app:tab:encoder-basis}).

\paragraph{Normalization with $\MD$.}
As noted in Section~\ref{sec:llsq-active-shape-decoder}, 
we apply normalization to $\MF$ through the use of the diagonal matrix $\MD$.
To be more precise, we calculate $d_k$ (the diagonal entries of $\MD$) as the inverse of the diameter of the bounding box for each $f_d(\cdot~|~\vh_k)$ in~\eqref{eq:active-shape}.
Empirically, this improves the shape reconstruction performance after self-training (see Tab.~\ref{app:tab:lsq-normalization}).

\begin{table}[t!]
\small
\centering
\begin{tabular}{cccc}
\toprule
Ablations & \makecell{$e_{shape}$ \\(AUC)} $\uparrow$ & \makecell{$e_{shape}$ \\ (AUC)} $\uparrow$ & \makecell{$e_{shape}$ \\ (AUC)} $\uparrow$\\
& \multicolumn{1}{c}{3 cm} & \multicolumn{1}{c}{5 cm} & \multicolumn{1}{c}{10 cm}\\ \midrule
Without normalization       & 0.23  & 0.40 & 0.63 \\
With normalization     & \tabfirst 0.25  & \tabfirst 0.43  & \tabfirst 0.65 \\\bottomrule
\end{tabular}
\caption{Comparing shape reconstruction performance between using and not using $\MD$ for normalization on the YCBV dataset.}
\label{app:tab:lsq-normalization}
\end{table}

\end{document}